\documentclass[11pt, a4paper]{neutral}

\usepackage{natbib}

\usepackage{commands}
\newcommand{\indep}{\perp \!\!\! \perp}

\usepackage{mathtools}
\usepackage{bbm}
\usepackage{multirow}
\usepackage{subcaption}
\usepackage{float}

\definecolor{babypink}{rgb}{0.98, 0.8, 0.91}
\definecolor{lavenderblue}{rgb}{0.8, 0.8, 1.0}
\definecolor{warmcream}{rgb}{0.992, 0.945, 0.918}
\definecolor{coolblue}{rgb}{0.906, 0.941, 0.965}

\definecolor{lightblue}{HTML}{0071bc}
\usepackage[breaklinks=true, colorlinks, citecolor=lightblue,
            linkcolor=lightblue, urlcolor=lightblue, bookmarks=false]{hyperref}
\usepackage{url}

\long\def\dk#1{}
\long\def\kwz#1{}
\long\def\lai#1{}

\theoremstyle{plain}
\newtheorem{theorem}{Theorem}[section]

\newtheorem{lemma}[theorem]{Lemma}

\theoremstyle{definition}

\newtheorem{assumption}[theorem]{Assumption}
\theoremstyle{remark}
\newtheorem{remark}[theorem]{Remark}

\title{Tabular Foundation Models Can Do Survival Analysis}
\author[1]{Da In Kim\textsuperscript{*}}
\author[1]{Wei Siang Lai\textsuperscript{*}}
\author[1]{Kelly W. Zhang}
\affil[1]{Imperial College London}
\correspondingauthor{\textsuperscript{*}Equal contribution.\ Correspondence to \texttt{kelly.zhang@imperial.ac.uk}.}
\codelink{https://github.com/dain20000222/TabularFM4Survival}

\begin{abstract}
While tabular foundation models have achieved remarkable success in classification and regression, adapting them to model time-to-event outcomes for survival analysis is non-trivial due to right-censoring, where data observations may end before the event of interest occurs. We utilize a classification-based framework that reformulates both static and dynamic survival analysis as a series of binary classification problems by discretizing event times. Censored observations are naturally handled as examples with missing labels at certain time points. This classification formulation enables existing tabular foundation models (TFMs) to perform survival analysis through in-context learning without explicit training.
In contrast to classical approaches that use binary classifiers to model discrete-time hazards, our approach directly models cumulative failure probabilities, which we find empirically to be more robust to the number of discretization bins by avoiding multiplicative accumulation of per-bin errors. We prove that under standard censoring assumptions, minimizing our binary classification loss recovers the true survival probabilities as the training set size increases. We demonstrate through evaluation across $48$ real-world datasets ($43$ static and $5$ dynamic) that off-the-shelf TFMs with this classification formulation outperform classical and deep learning baselines on average over multiple survival metrics.
\end{abstract}

\begin{document}
\maketitle

\section{Introduction}

Survival analysis or time-to-event analysis models \textit{when} an event of interest will occur, rather than just \textit{whether} it will happen or not \citep{wang2019machine}. Applications of survival analysis span many domains, including healthcare (e.g., time to disease recurrence or mortality), social sciences (e.g., duration of unemployment), engineering (e.g., time to equipment failure), and finance/business (e.g., time to default or customer churn). 
A defining feature of survival analysis that differentiates it from standard regression is \textit{censoring}, where the true event time is only partially observed \citep{klein2006survival}. Under right-censoring, the exact time an event of interest occurs may be unknown because the observation period may end before the event happens. For example, consider a $5$-year clinical drug trial where mortality is the main outcome; the mortality event may be right-censored if the individual drops out early or remains alive through the $5$-year observation period. This incomplete event time information distinguishes survival analysis from standard regression tasks, where the target is always observed.

Despite its broad applicability, survival analysis often remains challenging due to (a) right-censoring, (b) small dataset sizes, e.g., a several hundred samples or fewer \citep{drysdale2022survset}, and (c) covariates that may be irregularly sampled over time \citep{wang2019machine}. While foundation models have been developed for survival analysis in specialized medical tasks  \citep{steinberg2024motor,vu2025tabular}, for general-purpose survival analysis, it remains standard to use specialized models fit from scratch on a single dataset. 

Recently, tabular foundation models (TFMs) \citep{hollmann2022tabpfn, grinsztajn2025tabpfn, zhang2025mitra, gardner2024large} have achieved remarkable success in classification and regression tasks, often outperforming methods like XGBoost, especially in small-data regimes. 
TFMs are usually transformer-based models, pretrained on a diverse collection of synthetic or real-world datasets, that are able to perform in-context learning at inference time without parameter updates. While this paradigm offers a promising direction for survival analysis, adapting TFMs to this setting is non-trivial. Standard regression architectures cannot easily account for censoring, a challenge that historically necessitated the development of specialized variants for survival analysis, e.g., Random Survival Forest \citep{ishwaran2008random}. Additionally, while TFMs are primarily evaluated on classification accuracy or mean squared error, survival modeling requires well-calibrated probabilities that accurately characterize risk dynamics over time. 

In this work, we cast survival analysis as a series of binary classification problems that involve modeling the cumulative failure probability. This allows survival models to be optimized using a standard binary cross-entropy loss and furthermore enables TFM classifiers to perform survival analysis out-of-the-box via in-context learning, leveraging pretrained representations to achieve high performance without requiring additional model training or hyperparameter tuning. Our approach discretizes event times, a common practice in the survival analysis literature \citep{efron1988logistic,craig2025review,allison1982discrete,singer1993s,lee2019dynamic,yu2011learning,maystre2022temporally}, and is applicable to both static and dynamic survival modeling. Our contributions are as follows:

\noindent \bo{(1) Classification formulation of survival analysis that models cumulative failure probabilities directly.} 
In this work, we show that the exact classification formulation used with TFMs can significantly impact performance. We find empirically that our cumulative failure modeling formulation outperforms other classification formulations with TFMs---including formulations that model the discrete hazard hazard rate \citep{efron1988logistic,allison1982discrete,craig2025review,zhong2019survival} and a multi-class classification formulations used in other work \citep{qi2026survivalpfn}. Compared to concurrent work developing survival-specific foundation models \citep{seletkov2026survival,qi2026survivalpfn}, our formulation is flexible in that changing the number or placement of the discretization bins does not require retraining the model (see Section \ref{sec:classification-related} for a detailed discussion).

\noindent \bo{(2) Consistent model guarantee as the training set grows.}
We prove that minimizing our binary cross-entropy loss recovers the true underlying survival probabilities as the training sample size grows. This consistency result holds under a conditional independent censoring assumption that is common in the survival analysis literature \citep{uno2011c,kalbfleisch2002statistical}. We also show empirically that classification models with lower binary cross-entropy loss have more accurate survival probabilities (Figure \ref{fig:correlation}).

\noindent \bo{(3) Empirical evaluation on a suite of real-world survival analysis datasets.} We evaluate on $43$ static and $5$ dynamic real-world survival analysis datasets from \textit{SurvSet} \citep{drysdale2022survset}. Across multiple common survival metrics, TFMs using our classification formulation outperform hazard-based and multi-class formulations with TFMs and a range of classical and deep learning-based survival models on average.

\section{Problem statement}
\label{sec:problemStatement}

We consider both \textit{static} and \textit{dynamic} survival analysis \citep{chen2024introduction,van2011dynamic}. While the static survival analysis models event times given baseline features collected at the start of observation, dynamic models update predictions as more information becomes available.

\subsection{Static setting}
\label{sec:static_setting}
Each dataset consists of a collection of $n$ i.i.d. data tuples $\{ (X_{i,0}, T_i, C_i, \delta_i ) \}_{i=1}^n$. Here, $X_{i,0} \in \real^d$ is a feature vector consisting of baseline covariates given at the start of the observation period. $T_i > 0$ represents the (latent) event time and $C_i > 0$ is the censoring time. We use $\delta_i \triangleq \mathbbm{1}(T_i \leq C_i)$ as an indicator of whether the event is observed or censored. Note, $\delta_i = 1$ if the event is observed and $\delta_i = 0$ if censored.

Our primary objective is to estimate the survival probability $S(t \mid X_{i,0})$, which represents the probability that the event occurs after time $t$:
\begin{align}
    \label{eqn:survival-static}
    S(t \mid X_{i,0}) \triangleq \PP \left( T_i > t \mid X_{i,0} \right).
\end{align}
Note that $S(t \mid X_{i,0})$ is a monotonically decreasing function of $t$ with $S(0 \mid X_{i,0}) = 1$ and $\lim_{t \to \infty} S(t \mid X_{i,0}) = 0$. 
Next, we introduce a conditionally independent censoring assumption. Note, without this assumption, the survival probabilities are provably not identifiable from observed data \citep{tsiatis1975nonidentifiability}. 
\begin{assumption}[Conditionally Independent Censoring]
    \label{assump:censoring}
    The event time $T_i$ and censoring time $C_i$ are conditionally independent given $X_{i,0}$, i.e., $T_i \indep C_i \mid X_{i,0}$.
\end{assumption}

\subsection{Dynamic setting}
\label{sec:dynamic_problem_statement}
In the dynamic setting, the baseline covariates $X_{i,0}$ is replaced by a sequence of observations revealed over time that could be observed at irregular intervals. We denote this sequence of covariates as $\MC{X}_i \triangleq \{ X_{i,t} : t \in \MC{T}_i\}$, where $\MC{T}_i \subset [0, \infty)$ represents the observation times. The history of covariates up to time $t$ is defined as
\begin{align}
    \label{eqn:history}
    \HH_{i,t} \triangleq \big\{ (s, X_{i,s}) : s \in \MC{T}_i, s \leq t \big\}.
\end{align}
The dataset has $n$ i.i.d. tuples $\{ (\MC{X}_i, T_i, C_i, \delta_i ) \}_{i=1}^n$, where the definitions of $T_i$, $C_i$, and $\delta_i$ follow Section \ref{sec:static_setting}.

The modeling task is to estimate the conditional survival probability for a future horizon $\Delta > 0$, given the information available at time $t$:
\begin{align}
    \label{eqn:survival-dynamic}
    S(t+\Delta \mid T_i > t, \HH_{i,t} ) \triangleq \PP \left( T_i > t +\Delta \mid T_i > t, \HH_{i,t} \right).
\end{align}
We use a dynamic analogue of the conditionally independent censoring condition from the static setting (Assumption \ref{assump:censoring}), which ensures the dynamic survival probabilities \eqref{eqn:survival-dynamic} are identifiable from observed data.
\begin{assumption}[Dynamic Conditionally Independent Censoring]
    \label{assump:censoring-dynamic}
    For any $t \geq 0$, the event time $T_i$ and censoring time $C_i$ are conditionally independent given the history and survival up to time $t$, i.e., $T_i \indep C_i \mid (\HH_{i,t}, T_i > t)$.
\end{assumption}

\section{Related work}
\label{sec:related_work}

\subsection{Overview of static and dynamic survival modeling approaches}
The Cox Proportional Hazards (CoxPH) model \citep{cox1972regression} and Random Survival Forests \citep{ishwaran2008random} are the most widely utilized approaches in applied static survival analysis \citep{wang2019machine}. Deep learning-based static survival models largely fall into two categories: (1) \textit{Deep extensions of (semi-)parametric survival models} \citep{wiegrebe2024deep,kvamme2019time,nagpal2021deep}, e.g., DeepSurv \citep{katzman2018deepsurv}, which replaces the linear risk function in CoxPH with a neural network, and (2) \textit{Discrete-time models} that partition the time horizon into intervals before modeling event probabilities \citep{lee2018deephit,gensheimer2019scalable,chi2021deep}; see Section \ref{sec:classification-related} for classification-based approaches.

The predominant dynamic survival modeling approaches fall into three categories: (1) \textit{Landmarking}, which fits a static survival model at multiple timepoints \citep{van2007dynamic}. (2) \textit{Deep sequence or Joint modeling}, which jointly models longitudinal observations and event times \citep{tsiatis2004joint,ibrahim2010basic}; this includes classical statistical approaches that use linear mixed-effects models and CoxPH, as well as deep-learning variants that use recurrent neural networks with survival objectives \citep{giunchiglia2018rnn,ren2019deep}. Neural ODEs have also been used to explicitly handle irregularly sampled data \citep{sodenTang,moon2022survlatent}. (3) \textit{Discrete-time models}, that extend the static discrete-time models to longitudinal data \citep{lee2019dynamic,kvamme2021continuous,thorsen2022discrete}.

\subsection{Classification models for survival analysis}
\label{sec:classification-related}
Multiple papers have formulated survival analysis as a classification problem. One of the first approaches was developed by \cite{efron1988logistic}, who used a logistic regression model to approximate the discrete hazard rate $\lambda (t_k \mid X_{i,0}) \triangleq \PP \big( T_i \in (t_k, t_{k+1}] \mid X_{i,0}, T_i > t_k \big)$, i.e., the probability that an event occurs in the $k$th interval given that it has not occurred by the start of the interval. The logistic regression model is fit by maximizing the likelihood of the observed data, which factors into a function of the hazard rate; note, the likelihood of the observed data includes both censored and uncensored examples. \citet{singer1993s} extended this to log-odds models. This general paradigm of using binary classifiers to model the hazard rate has been termed \textit{Survival Stacking} \citep{craig2025review,zhong2019survival}. The survival probability is recovered as the product of a function of the hazard rate: $S(t_k \mid X_{i,0}) = \prod_{j=1}^{k-1} \big\{ 1-\lambda (t_j \mid X_{i,0}) \big\}$. 

In our experiments, we evaluate survival stacking with off-the-shelf tabular foundation models and find that the discrete-hazard classification approach is competitive for small numbers of discretization bins $K$, but often degrades as $K$ increases (see Section \ref{sec:static-results} and Figure \ref{fig:performance_gains} for details). We hypothesize this sensitivity is due to the product-based construction of survival probabilities, under which small errors in the estimated per-bin hazards can accumulate over time. This is structurally similar to the error accumulation observed in iterated multi-step forecasting, where later predictions depend on a sequence of earlier estimated quantities \citep{marcellino2006comparison}. These observations motivate our formulation, which avoids multiplicative computing errors, by using TFMs to directly estimate cumulative failure probabilities at discretized time points.

Our formulation is also related to \citet{yu2011learning}, which similarly represents survival prediction using $K$ binary outcomes over discretized time points with the same label structure that we present in Section \ref{sec:staticMethod}. However, \citet{yu2011learning} propose multi-task logistic regression with dependencies across time points, trained using a conditional-random-field-regularized objective optimized via expectation maximization. In contrast, our formulation uses a simple binary cross-entropy loss that can be applied directly to off-the-shelf tabular foundation models through in-context learning. The empirical scope also differs: \citet{yu2011learning} evaluate on three large static datasets, whereas we study both static and dynamic survival analysis across a broader collection of real-world datasets. Subsequent work has extended the \citet{yu2011learning} framework to neural survival models trained from scratch \citep{fotso2018deep} and to models that learn a multi-task logistic regression model on top of representations from a pretrained tabular foundation models \citep{pham2026survival}. 

Beyond binary classification models, a a variety of approaches have been proposed to learn survival probabilities by training a neural network to predict which of $K$ time intervals an event occurs in, including DeepHit, Dynamic DeepHit, and TabSurv \citep{lee2018deephit,lee2019dynamic,kirpichenko2026tabsurv}. For the DeepHit models, when the event is censored, the models are trained to maximize the probability that the event occurred in any of the bins following the censoring time. We compare to both DeepHit and Dynamic DeepHit in our simulations but find they significantly underperform our TFM-based approach. 
We hypothesize this is because DeepHit and Dynamic DeepHit require training recurrent neural networks from scratch, which is less data efficient; these models were originally evaluated on larger datasets ($800$-$10,000$ samples), while the datasets we evaluate on are much smaller (median size of $\approx 500$).

\subsection{Concurrent work on in-context learning models for survival analysis}
After the initial release of our manuscript in January 2026, two related and independent efforts were released in May 2026: Survival In-Context (SIC) \citep{seletkov2026survival} and SurvivalPFN \citep{qi2026survivalpfn}. Both develop \emph{survival-specific} tabular foundation models by pretraining transformers from scratch on large-scale synthetic survival tasks. SIC uses the DeepHit training loss \citep{lee2018deephit}, while SurvivalPFN uses a cross-entropy objective and provides a censoring event indicator separately as an input to the model.\footnote{In our experiments we assessed a multi-class classification formulation akin to that of SurvivalPFN with off-the-shelf TFMs but found that they underperform our cumulative failure formulation we propose; for details see Remark \ref{remark:formulation} and Table \ref{tab:performance_comparison}.}  These models are complementary to our work: they incorporate survival structure directly into the pretraining procedure, whereas we ask whether existing general-purpose TFMs can be used for survival analysis without survival-specific pretraining, fine-tuning, or architectural changes.

This distinction leads to several practical differences. Because survival-specific foundation models are trained with a fixed survival output representation, they are tied to the discretization scheme used during pretraining; changing the number or placement of time bins requires modifying or retraining the model. Specifically, SIC uses $K=10$ bins and SurvivalPFN uses $K=1024$ bins. In contrast, our method varies the number of time bins $K$ by reprocessing the survival data while keeping the underlying TFM fixed. Our formulation also extends naturally to \emph{dynamic} survival analysis with time-varying covariates by re-querying the classifier at each landmark time, whereas SIC and SurvivalPFN, focus on static survival prediction. Moreover, SurvivalPFN uses of a censoring event indicator as an input to the model appears less directly compatible with dynamic landmarking, where censoring status depends on the landmark time and prediction horizon. Our contribution is distinct in showing that survival-analysis capabilities can be elicited from off-the-shelf TFMs through a simple binary classification reformulation. Note that we were not able to directly empirically compare to these three pre-trained models as they are currently not publicly available. 
\section{Methodology: Time-to-event modeling as supervised learning}
\label{sec:method}

We frame survival modeling under right-censoring as a series of binary classification tasks. By reducing survival modeling to a standard classification task, we can leverage the tabular foundation models and enable survival modeling via in-context learning without additional survival-specific training or hyperparameter tuning. 
Our approach involves transforming each time-to-event data tuple into a sequence of binary labels indicating whether the event has occurred by the end of each respective time interval. 
We first describe our temporal discretization approach, as it underpins the classification formulations for both the static (Section \ref{sec:staticMethod}) and dynamic (Section \ref{sec:dynamicMethod}) settings.

\noindent \bo{Temporal discretization.}
To transform the continuous survival task into a tractable classification problem, we partition the time horizon into $K$ discrete intervals $\{ \tau_k \}_{k=1}^K$. Given the boundaries $0 = t_0 < t_1 < t_2 < \dots < t_{K-1}$, we define the intervals as $\tau_k = (t_{k-1}, t_k]$ for $k < K$, with a final open-ended interval $\tau_K = (t_{K-1}, \infty)$. For convenience, let $t_K \triangleq \infty$. In practice, we determine the boundaries $\{ t_{k} \}_{k=1}^{K-1}$ using the quantiles of the observed event times in the training set to ensure a balanced distribution of events across intervals. This time discretization approach is well-established in the survival analysis literature \citep{efron1988logistic,craig2025review,allison1982discrete,singer1993s,lee2019dynamic,yu2011learning,maystre2022temporally}. An alternative censoring-aware binning that uses quantiles of the Kaplan--Meier estimate of the marginal survival function \citep{kvamme2021continuous} yields nearly identical performance on average; see Appendix~\ref{app:km_binning}.

\begin{figure*}[t]
  \begin{center}
    \vspace{-3mm}    
    \centerline{\includegraphics[width=0.9\linewidth]{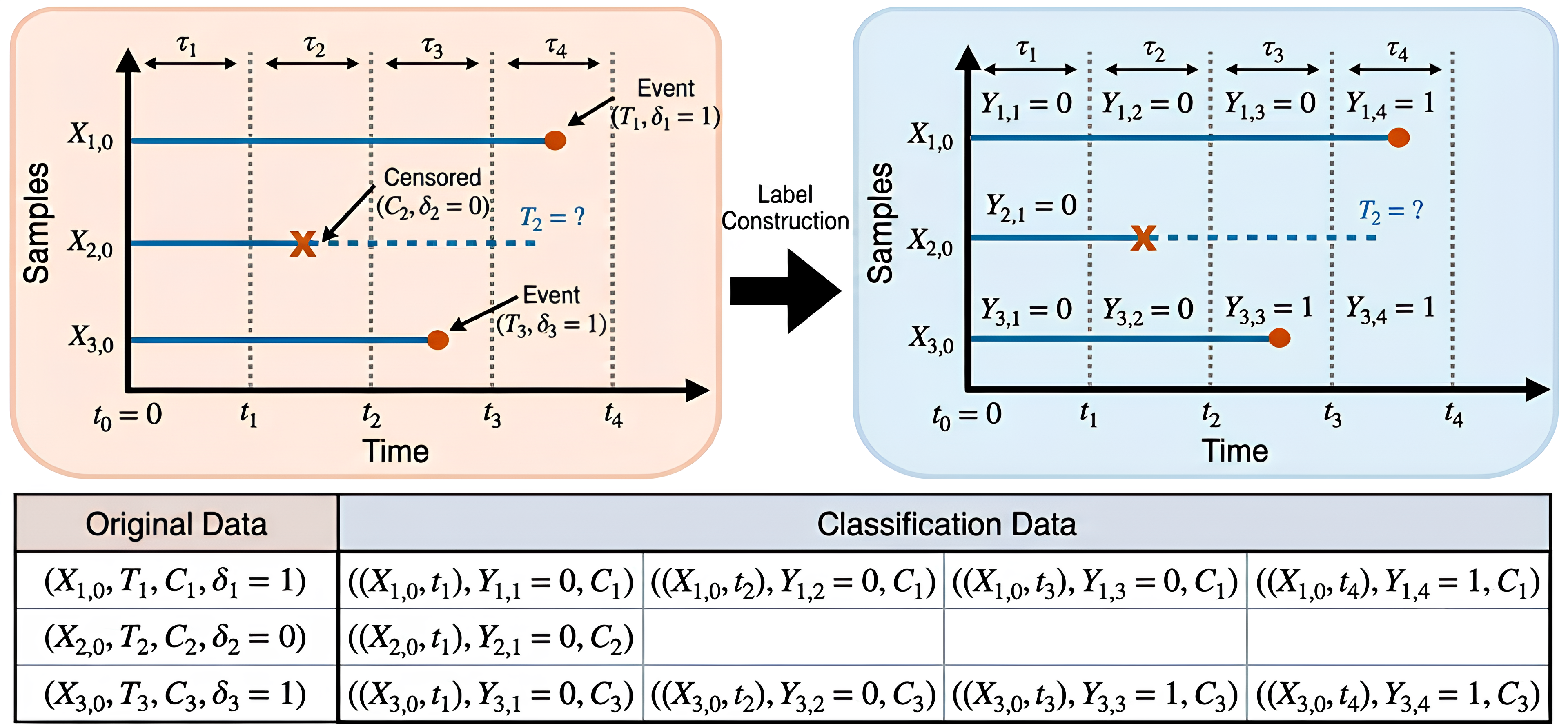}}
    \caption{\bo{Transforming of survival data into a binary classification dataset (static setting).} The \colorbox{warmcream}{left panel} illustrates the original time-to-event data $(X_{i,0}, T_i, C_i, \delta_i)$. For second example, the event ($T_2=?$) is censored at $C_2$. The \colorbox{coolblue}{right panel} displays the discretization of time into $5$ intervals $\{\tau_k \}_{k=1}^5$, where binary labels $Y_{i,k} = \mathbbm{1} (T_i \leq t_k)$ are defined at each timestep $t_k$ for $k=1, 2, 3, 4$. The table (bottom) shows the resulting classification dataset: each example is transformed into at most $4$ tuples, where labels are only constructed for time steps preceding the censoring time ($t_k < C_i$).
    \vspace{-8mm}
    }
    \label{fig:static_method}
  \end{center}
\end{figure*}

\subsection{Static setting}
\label{sec:staticMethod}
\bo{Constructing the supervised learning dataset.} 
To transform survival modeling into a supervised learning problem, we define binary labels $\{ Y_{i,k} \}_{k=1}^{K-1}$ where 
\begin{align}
    \label{eqn:Ydef}
    Y_{i,k} \triangleq \mathbbm{1}( T_i \leq t_k ).
\end{align}
Due to right-censoring, $Y_{i,k}$ is only observed when $t_k < C_i$. Note that by construction $T_i \in \tau_k$ for some interval $\tau_k$ where $k \in [1 \colon K-1]$, since $T_i < \infty$. 
As depicted in Figure~\ref{fig:static_method}, the original time-to-event data tuple $(X_{i,0}, T_i, C_i, \delta_i)$ is expanded into $K-1$ data tuples for binary classification: $\big\{ \big( (X_{i,0}, t_k), Y_{i,k}, C_i \big) \big\}_{k=1}^{K-1}$.
The censoring time $C_i$ is used to derive whether the label $Y_{i,k}$ is observed or missing for the time interval $\tau_k$. 

\noindent \bo{Model training.} We train a classifier $\hat{p}$ to minimize a empirical binary cross-entropy loss; below, the indicator $\mathbbm{1}(t_k < C_i)$ restricts the objective to observed labels $Y_{i,k}$:
\begin{align}
    \hat{\ell}_{\TN{static}}(\hat{p}) \triangleq \frac{1}{n} \sum_{i=1}^n \sum_{k=1}^{K-1} \mathbbm{1}(t_k < C_i) \cdot \TN{BCE}\big( \hat{p}(X_{i,0}, t_k), Y_{i,k} \big)
    \label{eqn:empirical-loss}
\end{align}
where $\TN{BCE}\big( p, Y \big)
    \triangleq - \left\{ Y \log p + (1-Y) \log(1-p) \right\}.$
By only evaluating the loss only for pairs $(i, t_k)$ where $t_k < C_i$, we restrict training strictly to observed labels $Y_{i,k}$. While simply ignoring unobserved data might appear to be a heuristic handling of censoring, we formally show in Section~\ref{sec:theory} that minimizing this loss recovers the true underlying survival probabilities as $n \to \infty$.

The cross-entropy loss from \eqref{eqn:empirical-loss} can be minimized using standard classification models, such as XGBoost or multi-layer perceptrons. Furthermore, this framework allows us to ``fit'' $\hat{p}$ via in-context learning using tabular foundation models for classification \citep{hollmann2022tabpfn,zhang2025mitra}, where $\hat{p}$ is computed by conditioning on the set of tuples $\big( (X_{i,0}, t_k), Y_{i,k} \big)$ for which $t_k < C_i$.

\noindent \bo{Inference.} Given a binary classifier $\hat{p}$, we form our estimate of the survival probability $S(t_k \mid X_{i,0})$ as follows:
\begin{align}
    \label{eqn:static_inference}
    \hat{S}(t_k \mid X_{i,0}) \triangleq 
    1- \hat{p}(X_{i,0}, t_k).
\end{align}
To ensure the survival probabilities are monotonically decreasing in $t_k$, by clipping $1-\hat{p}(X_{i,0}, t_k)$ if it is larger than $1-\hat{p}(X_{i,0}, t_{k-1})$. See Appendix \ref{app:classificationModel} for details.

\begin{remark}[Multi-Class Classification Formulations]
    \label{remark:formulation}
    Besides the cumulative-failure prediction formulation we present here and the discrete hazard formulation discussed in Section \ref{sec:classification-related}, one could also consider multi-class classification formulations. For example, the survival foundation model SurvivalPFN \citep{qi2026survivalpfn} is pre-trained to directly predict $\min(T_i, C_i)$ given the baseline covariates and censoring event indicator, $(X_{i,0}, \delta_i)$; at inference time the survival probability is obtained by inputting $(X, \delta=1)$ into the model. In Section \ref{sec:static-results}, we empirically compare XGBoost classifiers and off-the-shelf TFMs using this multi-class classification formulation and find that it significantly underperforms our cumulative-failure prediction formulation. We hypothesize that the cumulative-failure prediction formulation may be easier to learn from finite data because it decomposes survival prediction into a sequence of lower-complexity binary prediction problems: whether the event has occurred by a given time horizon. This avoids requiring the model to distinguish among many possible event-time bins, while still recovering survival probabilities.
\end{remark}

\subsection{Dynamic setting}
\label{sec:dynamicMethod}
We adopt the same temporal discretization and binary label definition, $Y_{i,k} \triangleq \mathbbm{1}(T_i \leq t_k)$, as in the static setting. In the dynamic setting, however, the model must account for time-varying information. At a given timestep $t_k$, the model is provided with the covariate history $\HH_{i,t_k} \triangleq \big\{ (s, X_{i,s}) : s \in \MC{T}_i, s \leq t_k \big\}$ and the knowledge that the subject has survived up to $t_k$ (i.e., $Y_{i,1} = \dots = Y_{i,k} =0$).
Consequently, each original time-to-event data tuple $(\MC{X}_i, T_i, C_i, \delta_i)$ is transformed into training examples: $\big( (\HH_{i,t_{k}}, Y_{i,1:k}, t_k, t_{k+\Delta}),  Y_{i,k+\Delta}, C_i \big)$,
for each $k \in \{ 0, \dots, K-2 \}$ and $\Delta \in \{ 1, \dots, K-k-1\}$, provided $t_k < C_i$. We use $Y_{i,1:k} \triangleq (Y_{i,1}, \dots, Y_{i,k})$. Same as in the static setting, $C_i$ is used to derive whether the $Y_{i,k+\Delta}$ is observed or missing.

\noindent \bo{Model training.}
We train a binary classifier $\hat{p}$ to minimize the dynamic empirical binary cross-entropy loss:
\begin{align}
\hspace{-1.5mm}\hat{\ell}_{\text{dynamic}}(\hat{p}) & \triangleq \frac{1}{n} \sum_{i=1}^n \sum_{k=0}^{K-2} \sum_{\Delta=1}^{K-k-1} \mathbbm{1}(t_{k} < T_i) \mathbbm{1}(t_{k+\Delta} < C_i) \cdot  \text{BCE} \big( \hat{p}(\mathcal{H}_{i,t_k}, Y_{i,1:k}, t_k, t_{k+\Delta}), Y_{i,k+\Delta} \big). \label{eqn:empirical-loss-dynamic}
\end{align}
We include the indicator $\mathbbm{1}(t_{k} < T_i)$ to ensure that we only train on examples for which the event is at risk, i.e., if the event occurred before time $t_k$, we are not using $\HH_{i,t_k}$ to do further prediction.
Similar to the static setting, the indicator $\mathbbm{1}(t_{k+\Delta} < C_i)$ to account for censoring effectively truncates the label sequence $Y_{i,1:K}$; our formulation allows us to utilize all information prior to that truncation time. 

Practically, for training these classifiers across different history lengths ($\HH_{i,t_k}$ for different values of $k$), we featurize the inputs into a fixed dimension $d$ vector $\phi_k(\HH_{i,t_k}, Y_{i,1:k}, t_k, t_{k+\Delta}) \in \real^d$ for a vector-valued functions $\phi_k$.
For our classification models, we use vectors $\phi(\HH_{i,t_k}, Y_{i,1:k}, t_k, t_{k+\Delta})$ as input and $Y_{i,t_{k+\Delta}}$ as the label. See Appendix \ref{app:classification_model_dynamic} for details on how we choose $\phi_k$.

\noindent \bo{Inference.}
Given a classifier $\hat{p}$ we estimate the dynamic survival probability $S(t_{k+\Delta} \mid T_i > t_k, \HH_{i,t_k})$ using:
\begin{align}
\label{eqn:dynamic_inference}
    \hat{S}(t_{k+\Delta} \mid T_i > t_k, \HH_{i,t_k}) \triangleq 
    1- \hat{p}(\HH_{i,k}, Y_{i,1:k}, t_k, t_{k+\Delta}).
\end{align}
We clip the survival probabilities to ensure that they are monotonically decreasing in $t_{k+\Delta}$ (Appendix \ref{app:classification_model_dynamic}).

\subsection{Theory: Consistent survival models via loss minimization}
\label{sec:theory}

We now show formally that minimizing the losses from \eqref{eqn:empirical-loss} and \eqref{eqn:empirical-loss-dynamic} for the static and dynamic settings, respectively, yields consistent survival models as the number of training samples increases ($n \to \infty$). This means that despite the presence of censoring, perfect binary classifiers necessarily recover the true underlying survival probabilities. The result follows from the fact that binary cross-entropy is minimized by the true conditional event probabilities, together with conditionally independent censoring (Assumptions \ref{assump:censoring} and \ref{assump:censoring-dynamic}), and establishes that the proposed classification objectives remain statistically aligned with the underlying survival targets despite censoring.

\noindent \bo{Static setting.}
For infinite training data ($n \to \infty$), the empirical loss from \eqref{eqn:empirical-loss} equals the population loss
\begin{align}
    \label{eqn:lossStatic}
    \ell_{\TN{static}}(p) \triangleq \E \bigg[ \sum_{k=1}^{K} \mathbbm{1}(C_i > t_k) \cdot \TN{BCE}\big( p(X_{i,0}, t_k), Y_{i,k} \big) \bigg].
\end{align}
In Theorem \ref{thm:minimizer-static} below, we show that a binary classifier $p$ minimizes the loss $\ell(p)$ if and only if it accurately models the ground-truth survival probabilities. 

\begin{theorem}[Consistent static survival model via loss minimization]
    \label{thm:minimizer-static}
    Let $p$ be any binary prediction model for $Y_{i,k}$ given $(X_{i,0}, t_k)$. Under Assumption \ref{assump:censoring} (Conditionally Independent Censoring), the population loss $\ell_{\TN{static}}(p)$ is minimized if and only if, for each $k \in \{ 1, \dots, K-1 \}$, 
\begin{align*} 
    p(x, t_k) 
    = \PP(Y_{i,k} = 1 \mid X_{i,0} = x) 
    = 1-\PP(T_i > t_k \mid X_{i,0} = x)
\end{align*}
for all $x$ in the support of $X_{i,0}$ satisfying the positivity condition $\PP(C_i \geq t_k \mid X_{i,0} = x) > 0$.
\end{theorem}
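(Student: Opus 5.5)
The plan is to decompose the population loss \eqref{eqn:lossStatic} into separate pointwise problems, one for each $k$ and each covariate value $x$, and to solve each by the standard fact that binary cross-entropy is minimized by the true conditional probability. The term $k=K$ contributes nothing that depends on $p$ in a meaningful way, since $t_K=\infty$ and $\mathbbm{1}(C_i>\infty)=0$. So the loss reduces to a sum over $k\in\{1,\dots,K-1\}$. Because $p(\cdot,t_k)$ can be chosen freely for each $k$, minimizing the sum is equivalent to minimizing each summand separately. By the tower property, each summand equals
\begin{align*}
\E\Big[ \E\big[ \mathbbm{1}(C_i>t_k)\,\TN{BCE}\big(p(X_{i,0},t_k),Y_{i,k}\big) \,\big|\, X_{i,0}\big]\Big].
\end{align*}
It therefore suffices to minimize the inner conditional expectation for ($P_X$-almost) every $x$.

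\textbf{Step 1 (factorization).} Since $Y_{i,k}=\mathbbm{1}(T_i\le t_k)$ is a function of $T_i$, Assumption \ref{assump:censoring} gives $\mathbbm{1}(C_i>t_k)\indep Y_{i,k}\mid X_{i,0}$. Write $\pi_k(x)\triangleq\PP(C_i>t_k\mid X_{i,0}=x)$ and $q_k(x)\triangleq\PP(T_i\le t_k\mid X_{i,0}=x)$. The inner expectation then factors as
\begin{align*}
\pi_k(x)\cdot\Big\{-q_k(x)\log p(x,t_k)-\big(1-q_k(x)\big)\log\big(1-p(x,t_k)\big)\Big\}.
\end{align*}
Without the independence assumption, the censoring indicator would tilt the conditional label distribution, and the minimizer would be $\PP(Y_{i,k}=1\mid X_{i,0}=x, C_i>t_k)$ instead.

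\textbf{Step 2 (pointwise minimization).} The function $a\mapsto -q\log a-(1-q)\log(1-a)$ on $[0,1]$ is strictly convex. Its unique minimizer is $a=q$, which follows from Gibbs' inequality: the excess over the minimum is $\mathrm{KL}(\mathrm{Bern}(q)\,\|\,\mathrm{Bern}(a))$. When $\pi_k(x)>0$, the summand is therefore minimized exactly at $p(x,t_k)=q_k(x)=1-\PP(T_i>t_k\mid X_{i,0}=x)$. When $\pi_k(x)=0$, the summand vanishes identically and $p(x,t_k)$ is unconstrained. This is why the conclusion is restricted to points satisfying positivity.

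\textbf{Step 3 (if and only if, and the main obstacle).} The "if" direction follows by summing the pointwise minima. For "only if", suppose $p(x,t_k)\neq q_k(x)$ on a set of $x$ with positive $P_X$-mass and $\pi_k(x)>0$. Then the KL excess integrated against $\pi_k\,dP_X$ is strictly positive, so the loss strictly exceeds its minimum. The main obstacle I expect is reconciling the measure-theoretic bookkeeping with the statement. First, the equality can only hold $P_X$-almost everywhere unless $X_{i,0}$ is discrete, so I would state it in that sense. Second, the theorem's positivity condition is written as $\PP(C_i\ge t_k\mid X_{i,0}=x)>0$, while the loss uses $C_i>t_k$. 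I would align these by using the strict inequality $\pi_k(x)>0$ in the proof, noting that the two coincide when $C_i$ has no atom at $t_k$. I would also assume the minimum is finite, which holds for example at $p=q_k$ whenever the entropy terms are finite.
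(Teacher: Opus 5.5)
Your proposal is correct and follows the paper's proof essentially step for step: condition on $X_{i,0}$, use Assumption~\ref{assump:censoring} to factor out $\PP(C_i > t_k \mid X_{i,0})$, and then use uniqueness of the pointwise binary cross-entropy minimizer (you via the KL/Gibbs identity, the paper via the derivative argument of Lemma~\ref{lemma:minimizer}). Your extra bookkeeping is sound: dropping the vacuous $k=K$ term, stating the conclusion $P_X$-almost everywhere, and using the strict condition $\PP(C_i > t_k \mid X_{i,0}=x)>0$ (which is also what the paper's proof actually ends with); your finiteness caveat is automatically satisfied, since the binary entropy is at most $\log 2$.
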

The equality $\PP(Y_{i,k} = 1 \mid X_{i,0}) = 1 - \PP(T_i > t_k \mid X_{i,0})$ above holds by the definition $Y_{i,k} \triangleq \mathbbm{1} (T_i \leq t_k)$. See Appendix \ref{app:proof-minimizer-static} for the proof of Theorem \ref{thm:minimizer-static}.

The significance of Theorem \ref{thm:minimizer-static} is that under censoring Assumption \ref{assump:censoring}, we can in principle recover the true survival probabilities $\PP(T_i > t_k \mid X_{i,0})$ via loss minimization, given sufficient data. \textit{This shows that survival analysis under temporal discretization can be reduced to a supervised, binary classification problem in a theoretically grounded manner; our empirical contribution is to demonstrate that this reduction unlocks practical use of modern tabular foundation models for survival analysis.}

\noindent \bo{Dynamic setting.}
Analogously to the static setting, in the dynamic setting, we can show that for infinite training data ($n \to \infty$), minimizing the loss from \eqref{eqn:empirical-loss-dynamic} will necessarily lead the classifier to learn the true underlying survival probabilities. This result is very similar to Theorem \ref{thm:minimizer-static}; see Appendix \ref{app:proof-minimizer-dynamic}.

\section{Experimental results}

\bo{Datasets.} We use datasets from \emph{SurvSet}, a collection of real-world, right-censored time-to-event datasets \citep{drysdale2022survset}. These datasets span multiple domains, for example, clinical studies in biomedical research, labor datasets related to unemployment duration, as well as engineering reliability datasets concerning time to system or component failure. Out of the total $69$ static and $8$ dynamic datasets in \emph{SurvSet}, we utilize $43$ static and $5$ dynamic datasets. We excluded datasets that (i) contained fewer than two unique event times in the test set, or (ii) had a Kaplan-Meier estimate of the censoring probability $\PP( C_i > t_k \mid X_{i,0}) = 0$ or $\PP( C_i > t_{k+\Delta} \mid T_i > t_k, \HH_{i,t_k}) = 0$ for any $t_k$. This second criterion ensures a positive probability of not being censored; without it, the survival distribution is non-identifiable in regions lacking support, and the inverse weighting used in standard evaluation metrics leads to unstable, biased estimates \citep{uno2011c}.

Across the $43$ static datasets, the median sample size is $461$ (IQR: $228$-$1494$). The dimension of covariates $X_{i,0}$ ranged from $4$ to $161$ (median: $8$, mean: $15.2$). The censoring rate in these datasets is heterogeneous ranging from from $6.6\%$ to $94.4\%$ (median: $56.4\%$). The dynamic datasets vary in scale, from the $4,603$ samples in \textit{oldmort} to smaller ones with $103$ to $556$ samples. Across these $5$ datasets, the number of covariates ranges from $4$ to $7$, and the censoring rate range is $27.2\%$-$66.2\%$. See Appendix \ref{app:datasets} for more details on the datasets.

\noindent \bo{Models.}
In both the static and dynamic settings, consider methods based on two tabular foundation models MITRA \citep{zhang2025mitra} and TabPFN-2.5 \citep{grinsztajn2025tabpfn},\footnote{We also assessed TabPFN-3 \citep{grinsztajn2026tabpfn}, but found it underperformed TabPFN-2.5; see Appendix \ref{app:tabpfnv3}.} as well as an XGBoost model. For our approach, which models the cumulative failure probability, we abbreviate using ``CF''. We also compare to classification approaches that model the discrete-time hazard rate \citep{efron1988logistic,allison1982discrete,kvamme2021continuous,craig2025review}, as discussed in Section \ref{sec:classification-related}; we abbreviate these methods using ``DH''. Finally, we use ``MC'' to abbreviate the multi-class classification approach inspired by SurvivalPFN \citep{qi2026survivalpfn}; see Remark \ref{remark:formulation} for additional details. 
In the \bo{static setting}, we compare to classical survival models CoxPH \citep{cox1972regression} and Random Survival Forest (RSF)
\citep{ishwaran2008random}, as well as neural survival methods DeepSurv \citep{katzman2018deepsurv}, a neural-network extension of the CoxPH model, and DeepHit \citep{lee2018deephit}.
In the \bo{dynamic setting}, we compare to landmarked versions of the CoxPH and RSF models \citep{van2007dynamic}. We also compare to two popular dynamic survival models: Dynamic DeepHit \citep{lee2019dynamic}, and Joint Modeling \citep{ibrahim2010basic}, which simultaneously models the longitudinal trajectory of time-varying covariates with a linear mixed-effects model and the time-to-event process with a CoxPH model. 

\subsection{Evaluation metrics}
\label{subsec:metrics}

We evaluate the survival models using standard metrics for discrimination and calibration. Specifically, we assess the ability to rank-order survival times using the (time-dependent) \textit{C-index} and the \textit{Integrated AUC}. To quantify calibration and overall predictive accuracy, we use the \textit{Integrated Brier Score}. \bo{We now discuss these metrics in the static setting; we discuss dynamic analogues, specific estimators, and implementation details in Appendix \ref{app:metrics}.} To ensure consistency under conditionally independent censoring (Assumption \ref{assump:censoring}), we use Inverse-Probability-of-Censoring Weighted (IPCW) estimators for all three metrics.

\noindent \bo{Concordance index (C-Index).}
The C-Index quantifies a survival model's discriminative ability by calculating the probability that, for a randomly selected pair of observations, the model assigns a higher risk to the observation that experiences the event first \citep{harrell1982evaluating,uno2011c}. A higher C-index indicates greater discriminative ability. In the static setting, this is given by
\begin{align}
    \label{eqn:staticcindex}
    C_{\TN{Index}} \triangleq \mathbb{P} \big(\hat{r}(X_{i,0}) > \hat{r}(X_{j,0}) \mid T_i < T_j \big),
\end{align}
where $\hat{r}$ is a risk scoring model (higher values imply earlier predicted events). 
For our classification-based survival models, we construct the risk score as follows:
\begin{align}
    \label{eqn:staticRisk}
    \hat{r}(x) = \frac{1}{K-1} \sum_{k=1}^{K-1} \left\{1 - \hat{S}(t_k \mid x) \right\}.
\end{align}
Above, the risk score $\hat{r}(x)$ can be interpreted as the average probability that an event has occurred across all time steps $t_1, \dots, t_{K-1}$. This aggregate measure ensures that observations with consistently lower survival curves are assigned higher risk, even if their survival at a single specific time point is similar. For our baseline models, we discuss how we construct risk scores in Appendix \ref{app:baseline}.

\noindent \bo{Integrated AUC.}
The time-dependent AUC \citep{hung2010estimation}assesses a model's ability to differentiate between examples that survive up to some time $t$ versus those that do not:
\begin{align}
\label{eqn:static_auc}
    \mathrm{AUC}(t) \triangleq \mathbb{P} \big( \hat{r}(X_{i,0}) > \hat{r}(X_{j,0}) \mid T_i \leq t, T_j > t \big),
\end{align}
where $\hat{r}$ is a risk scoring model. See \eqref{eqn:staticRisk} for the risk score we use for our classification-based survival models. 
The Integrated AUC takes a weighted average of $\mathrm{AUC}(t)$ over time. Specifically, in the static setting we use $\overline{\mathrm{AUC}} \triangleq \frac{\sum_{k=1}^{K-1} \mathrm{AUC}(t_k) w_{t_k}}{\sum_{k=1}^{K-1} w_{t_k}}$,
where $w_{t_k} \triangleq S(t_{k-1}) - S(t_k)$ for $S(t_k) \triangleq \PP( T_i > t_k )$ is the marginal survival probability \citep{antolini2005time,lambert2016summary}. This weights the AUC by the proportion of events occurring in each interval.

\noindent \bo{Integrated Brier Score (IBS).} 
The Brier Score is a mean-squared error of the estimated survival probability at time $t$ and the true event indicator at time $t$ \citep{graf1999assessment,kvamme2023brier}:
\begin{align}
\label{eqn:static_brier}
    \TN{BS}(t) \triangleq \mathbb{E}\left[ \big\{ \mathbbm{1}( T_i > t ) - \hat{S}(t \mid X_{i,0}) \big\}^2  \right].
\end{align}
Low Brier Scores imply the model's survival probabilities are accurate and well-calibrated (in contrast to C-index and Integrated AUC, which only assess accurate ranking). The Integrated Brier Score averages the $\TN{BS}(t)$ over time: $\overline{\TN{BS}} \triangleq \frac{1}{t_{K-1}-t_1} \int_{t_1}^{t_{K-1}} \mathrm{BS}(t) \, d t$.

\begin{table*}[h!]
\centering
\caption{\textbf{Average performance comparison in the static setting.}
Our classification-based survival models, especially MITRA-CF and XGBoost-CF, achieve the strongest overall performance across the 43 benchmark datasets. Results are averaged over four time-discretization granularities ($K \in \{5,10,15,20\}$) using quantiles of observed event times. Values are reported as mean $\pm$ standard error. The best and second-best performers are highlighted in \colorbox{blue!25}{dark blue} and \colorbox{blue!10}{light blue}, respectively.
\vspace{-1.5mm}}
\label{tab:performance_comparison}
\resizebox{\textwidth}{!}{
\begin{tabular}{l ccc ccc ccc cc}
\toprule
\multirow{2}{*}{\textbf{Model}} &
\multicolumn{3}{c}{\textbf{C-Index}} &
\multicolumn{3}{c}{\textbf{Integrated AUC}} &
\multicolumn{3}{c}{\textbf{Integrated Brier Score}} &
\multirow{2}{*}{\textbf{Avg. Rank $\downarrow$}} &
\multirow{2}{*}{\textbf{Avg. ELO $\uparrow$}} \\
\cmidrule(r){2-4}
\cmidrule(r){5-7}
\cmidrule(r){8-10}
& Value $\uparrow$ & Rank $\downarrow$ & ELO $\uparrow$
& Value $\uparrow$ & Rank $\downarrow$ & ELO $\uparrow$
& Value $\downarrow$ & Rank $\downarrow$ & ELO $\uparrow$
& & \\
\midrule

\multicolumn{12}{l}{\textit{Classification-based Models}} \\

\textbf{MITRA-CF}
& $0.674_{\pm 0.017}$
& \cellcolor{blue!25}\textbf{5.2}$_{\pm 0.4}$
& \cellcolor{blue!25}\textbf{1047}
& \cellcolor{blue!25}\textbf{0.729}$_{\pm 0.016}$
& \cellcolor{blue!25}\textbf{5.0}$_{\pm 0.4}$
& \cellcolor{blue!25}\textbf{1051}
& \cellcolor{blue!25}\textbf{0.164}$_{\pm 0.007}$
& \cellcolor{blue!25}\textbf{3.9}$_{\pm 0.3}$
& \cellcolor{blue!25}\textbf{1077}
& \cellcolor{blue!25}\textbf{4.7}$_{\pm 0.3}$
& \cellcolor{blue!25}\textbf{1058} \\

\textbf{TabPFN-CF}
& $0.635_{\pm 0.020}$
& $8.5_{\pm 0.5}$
& 964
& $0.691_{\pm 0.017}$
& $8.5_{\pm 0.5}$
& 963
& $0.180_{\pm 0.009}$
& $7.0_{\pm 0.5}$
& 1000
& $8.0_{\pm 0.4}$
& 976 \\

\textbf{XGBoost-CF}
& \cellcolor{blue!25}\textbf{0.688}$_{\pm 0.017}$
& \cellcolor{blue!25}$\textbf{5.2}_{\pm 0.5}$
& \cellcolor{blue!10}1046
& $0.726_{\pm 0.017}$
& $5.3_{\pm 0.5}$
& 1044
& \cellcolor{blue!10}$0.166_{\pm 0.008}$
& \cellcolor{blue!10}$4.0_{\pm 0.4}$
& \cellcolor{blue!10}1076
& \cellcolor{blue!10}$4.9_{\pm 0.4}$
& \cellcolor{blue!10}1055 \\

\midrule
\multicolumn{12}{l}{\textit{Multiclass Variants}} \\

\textbf{MITRA-MC}
& $0.626_{\pm 0.014}$
& $9.0_{\pm 0.5}$
& 950
& $0.685_{\pm 0.016}$
& $8.2_{\pm 0.6}$
& 971
& $0.289_{\pm 0.014}$
& $10.9_{\pm 0.2}$
& 903
& $9.4_{\pm 0.4}$
& 941 \\

\textbf{TabPFN-MC}
& $0.638_{\pm 0.018}$
& $7.6_{\pm 0.6}$
& 985
& $0.694_{\pm 0.019}$
& $7.2_{\pm 0.6}$
& 996
& $0.322_{\pm 0.019}$
& $11.6_{\pm 0.3}$
& 883
& $8.8_{\pm 0.4}$
& 955 \\

\textbf{XGBoost-MC}
& $0.634_{\pm 0.016}$
& $8.9_{\pm 0.5}$
& 951
& $0.679_{\pm 0.018}$
& $8.6_{\pm 0.5}$
& 956
& $0.305_{\pm 0.017}$
& $11.2_{\pm 0.3}$
& 894
& $9.6_{\pm 0.4}$
& 934 \\

\midrule
\multicolumn{12}{l}{\textit{Discrete-time Hazard Variants}} \\

\textbf{MITRA-DH}
& $0.677_{\pm 0.018}$
& \cellcolor{blue!25}$\textbf{5.2}_{\pm 0.4}$
& 1044
& \cellcolor{blue!10}$0.728_{\pm 0.016}$
& \cellcolor{blue!10}$5.2_{\pm 0.4}$
& \cellcolor{blue!10}1047
& $0.171_{\pm 0.007}$
& $5.4_{\pm 0.3}$
& 1039
& $5.3_{\pm 0.3}$
& 1043 \\

\textbf{TabPFN-DH}
& $0.657_{\pm 0.020}$
& \cellcolor{blue!10}$5.3_{\pm 0.6}$
& 1044
& $0.723_{\pm 0.015}$
& \cellcolor{blue!25}\textbf{5.0}$_{\pm 0.5}$
& \cellcolor{blue!25}\textbf{1051}
& $0.191_{\pm 0.012}$
& $6.2_{\pm 0.6}$
& 1018
& $5.5_{\pm 0.5}$
& 1038 \\

\textbf{XGBoost-DH}
& \cellcolor{blue!10}$0.687_{\pm 0.018}$
& $5.5_{\pm 0.5}$
& 1036
& $0.721_{\pm 0.017}$
& $5.9_{\pm 0.5}$
& 1026
& $0.167_{\pm 0.008}$
& $4.3_{\pm 0.4}$
& 1065
& $5.2_{\pm 0.4}$
& 1042 \\

\midrule
\multicolumn{12}{l}{\textit{Classical \& Neural Survival Baselines}} \\

\textbf{CoxPH}
& $0.682_{\pm 0.017}$
& \cellcolor{blue!25}$\textbf{5.2}_{\pm 0.6}$
& 1043
& $0.721_{\pm 0.016}$
& $5.6_{\pm 0.6}$
& 1032
& $0.169_{\pm 0.008}$
& $4.9_{\pm 0.5}$
& 1054
& $5.2_{\pm 0.5}$
& 1043 \\

\textbf{RSF}
& $0.666_{\pm 0.021}$
& $6.7_{\pm 0.5}$
& 1007
& $0.717_{\pm 0.017}$
& $7.0_{\pm 0.5}$
& 1000
& $0.167_{\pm 0.006}$
& $5.2_{\pm 0.4}$
& 1046
& $6.3_{\pm 0.4}$
& 1018 \\

\textbf{DeepHit}
& $0.585_{\pm 0.015}$
& $11.0_{\pm 0.4}$
& 900
& $0.612_{\pm 0.017}$
& $11.2_{\pm 0.3}$
& 894
& $0.211_{\pm 0.008}$
& $9.4_{\pm 0.4}$
& 943
& $10.5_{\pm 0.3}$
& 912 \\

\textbf{DeepSurv}
& $0.650_{\pm 0.017}$
& $7.7_{\pm 0.5}$
& 984
& $0.685_{\pm 0.017}$
& $8.3_{\pm 0.5}$
& 969
& $0.179_{\pm 0.008}$
& $7.0_{\pm 0.4}$
& 1001
& $7.7_{\pm 0.4}$
& 985 \\

\bottomrule
\end{tabular}
}
\vspace{-2mm}
\end{table*}

\subsection{Static results}
\label{sec:static-results}
\bo{Aggregate performance trends (Table \ref{tab:performance_comparison}).} Our evaluation across $43$ datasets demonstrates that classification-based survival models consistently outperform existing baselines. The strong performance of XGBoost-CF highlights the benefits of our classification formulation, while MITRA-CF having the best overall performance demonstrates the advantage of leveraging prior knowledge in TFMs. The discrete-hazard (DH) and multi-class (MC) variants of the same classifiers underperform the direct variants on every aggregate metric, illustrating that the choice of formulation matters. Our Table \ref{tab:performance_comparison} results are averaged over four different time-discretization granularities $K \in \{5, 10, 15, 20\}$. 
Overall, we find that our approach is relatively robust to the choice of $K$; see Appendix \ref{app:K_sensitivity} and Figure \ref{fig:performance_gains} for additional details. 
We also notice that CoxPH outperforms both DeepHit and DeepSurv, likely because deep survival models trained from scratch struggle in the small-sample regimes. These results suggest that reformulating survival as classification allows general-purpose foundation models to leverage their inductive biases more effectively.

\noindent \bo{Heterogeneity in gains across datasets (Figure \ref{fig:performance_gains}).}
While the aggregate gains in Table~\ref{tab:performance_comparison} are modest in median, the per-dataset distribution is heterogeneous. For each of the $43$ static datasets $d$, we compute the relative gain of MITRA-CF over the best non-TFM baseline: $G_d \triangleq \frac{m^{\TN{TFM}}_d - m^{\TN{best-baseline}}_d}{m^{\TN{best-baseline}}_d}$,
where $m$ is the metric of interest (C-index, Integrated AUC, or IBS, with signs adjusted so that larger $G_d$ corresponds to greater improvement for the TFM). \textit{We find that the gains in the top $30\%$ of datasets shows a $5$--$9\%$ improvement and the top $10\%$ show $>20\%$ improvement over the best non-TFM baseline.}

Furthermore, our results in Figure \ref{fig:performance_gains} also illustrate the benefits of the cumulative failure modeling approach over the discrete hazard modeling approach, especially as the number of discretization intervals $K$ increases. Notice especially for the median dataset (subfigure a), that the discrete hazard (DH) modeling approach has gains over the baseline methods that are monotonically decreasing as $K$ increases. In contrast, the cumulative failure (CF) approaches (especially with MITRA) have performance that has relatively stable or even increasing gains over baseline methods as $K$ increases.

\begin{figure}[h!]
    \centering
    \vspace{-5mm}
    \includegraphics[width=\linewidth]{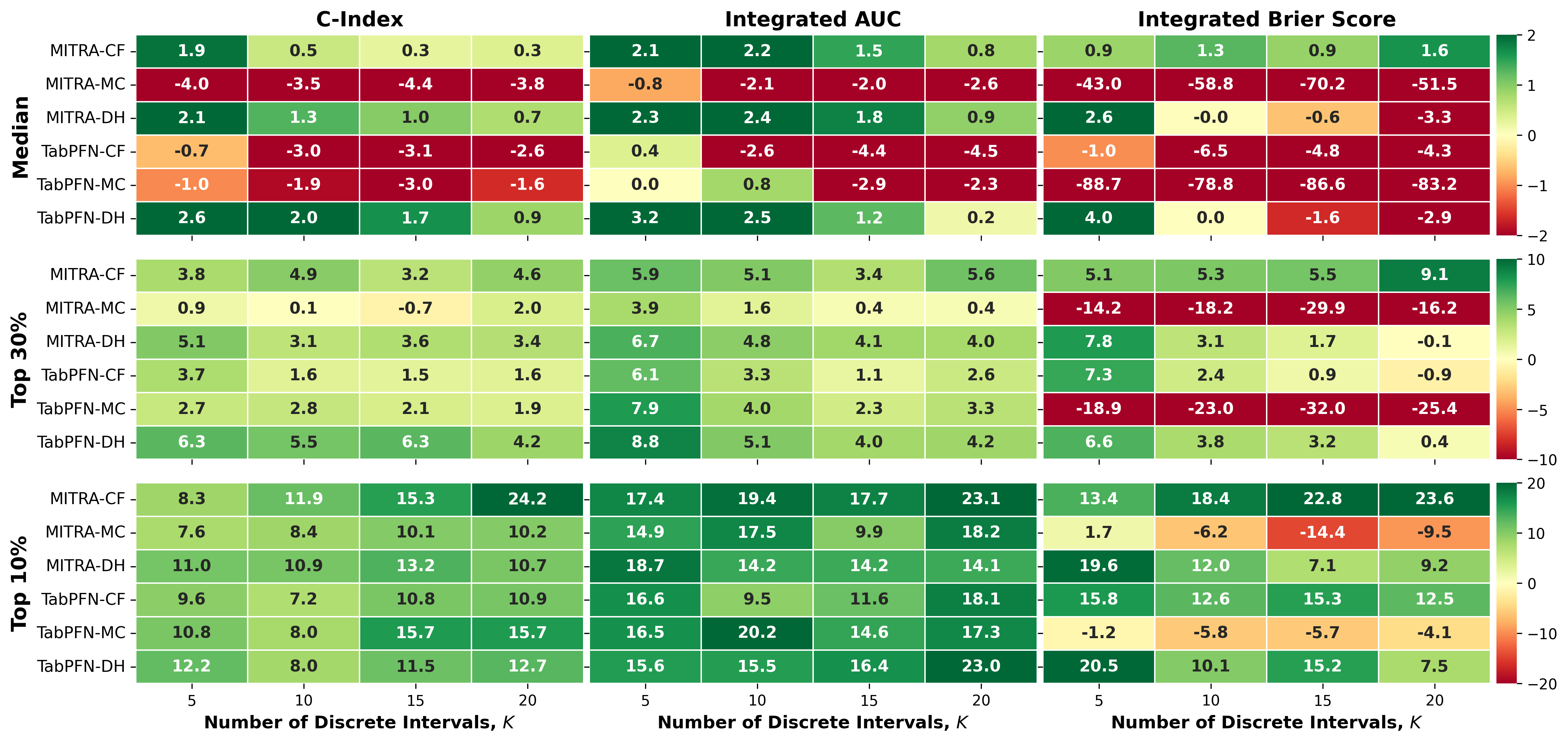}

    \caption{\bo{Relative gain $G_d$ of MITRA-CF over the best non-TFM baseline across $43$ static datasets.} The median gain (top) is modest; however, the top $30\%$ of datasets (middle) show $5$--$9\%$ improvement, and the top $10\%$ (bottom) show $>20\%$ improvement. The best non-TFM baseline is chosen per dataset by validation performance (c-index) among \{CoxPH, RSF, DeepHit, DeepSurv, XGBoost, XGBoost-DH, XGBoost-MC\}.}
    \label{fig:performance_gains}
\end{figure}

\begin{figure}[h!]
    \centering
    \begin{center}
    \includegraphics[width=0.55\linewidth]{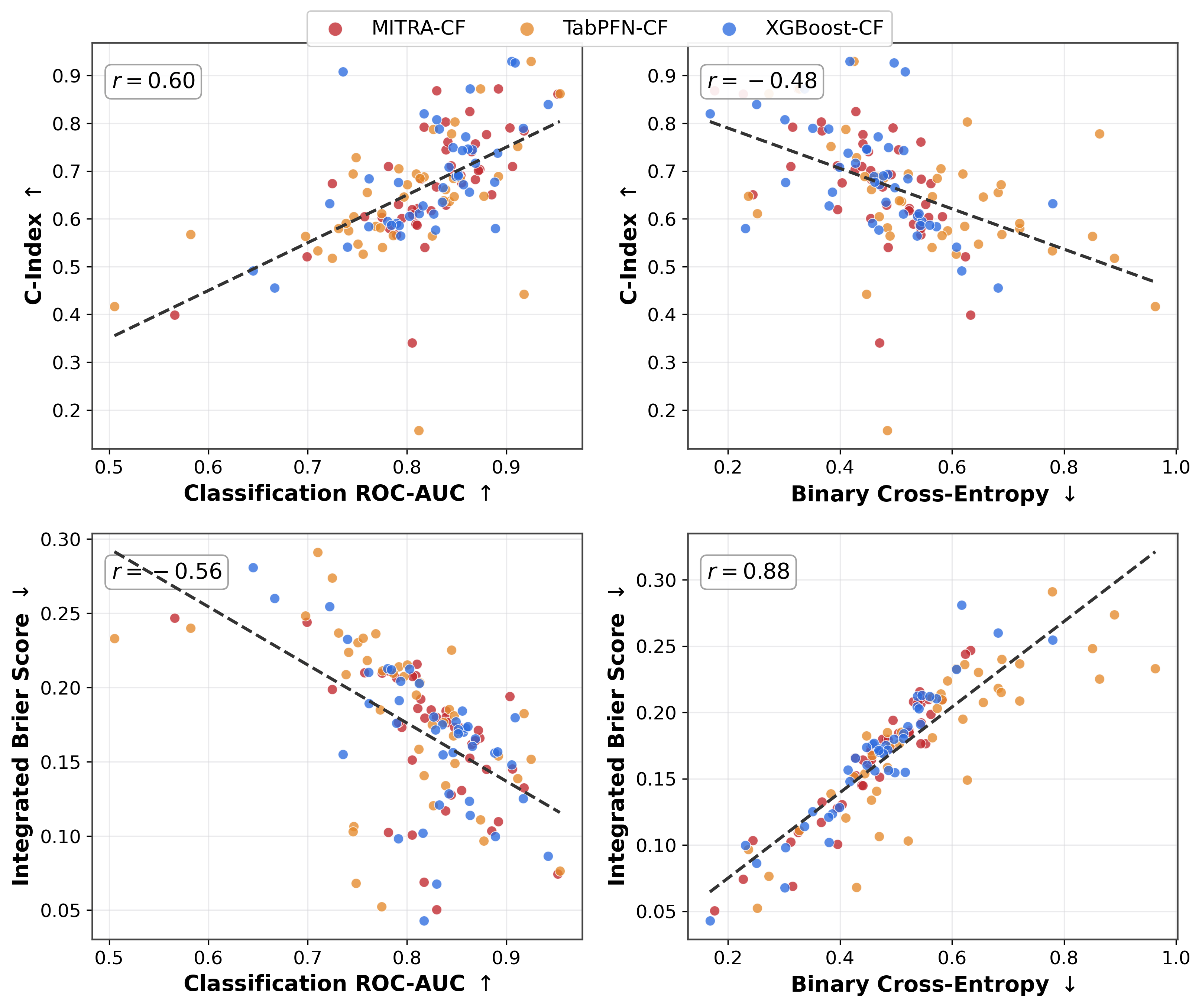}
  \end{center}
    \vspace{-3mm}
      \caption{\bo{Correlation between classification and survival metrics.} Each point represents a classification model evaluated on $1$ of the $43$ datasets (test split, averaged over $K \in \{5, 10, 15, 20\}$). We see a strong correlation ($r = 0.89$) between binary cross entropy loss and integrated Brier score. This empirically supports Theorem \ref{thm:minimizer-static}: a model that minimizes the binary cross-entropy loss recovers accurate survival probabilities.}
      \vspace{-3mm}
  \label{fig:correlation}
\end{figure}

\noindent \bo{Correlation between classification and survival metrics (Figure \ref{fig:correlation}).}
A natural question is whether improvements in the binary classification objective from Section \ref{sec:method} translate into improvements in standard survival metrics. To investigate this, we examine the relationship between \textit{classification metrics}---binary cross-entropy loss and ROC-AUC---and \textit{survival metrics}---C-Index and IBS---across the $43$ static datasets. We find only a moderate correlation between the classification metrics and C-Index, reflecting the fact that our binary objective does not explicitly optimize for ranking. In contrast, binary cross-entropy loss is strongly correlated with Integrated Brier Score ($r = 0.89$), supporting the theoretical result that minimizing the proposed binary cross-entropy loss targets accurate survival probability estimates (Theorem \ref{thm:minimizer-static}).

\noindent \bo{Practically, these results suggest that model selection can be performed directly in the classification space: among candidate classifiers, including TFMs and non-TFM models, choose the one with the lowest validation binary cross-entropy loss.} This provides a simple, survival-aware selection criterion and empirically supports the reduction of survival probability modeling to binary classification formalized in Theorem \ref{thm:minimizer-static}.

\subsection{Dynamic results}

\noindent \bo{Aggregate performance trends (Table \ref{tab:performance_comparison_dynamic_k_20p0}).}
In the dynamic setting, the tabular foundation model MITRA-CF is the best overall under both $K=5$ and $K=8$; in particular, it is the best on Integrated AUC and IBS, and achieves the highest average C-Index. While the Joint Model has historically been a competitive method for dynamic modeling, MITRA-CF achieves the best raw values in all metrics. Additionally, MITRA-CF's superior IBS suggests that it captures complex temporal patterns more effectively than Joint Models, which model the relationship between longitudinal inputs and risk using a specific pre-specified functional form. As in the static setting, the DH variants of the TFMs underperform their direct counterparts, consistent with our K-sensitivity analysis: dynamic survival reconstruction also accumulates multiplicative errors over the prediction horizon.

The performance gap between MITRA-CF and XGBoost-CF widens in the dynamic setting. While XGBoost-CF is competitive in the static setting, its significant decline in average rank compared to MITRA-CF suggests that the temporal prior knowledge embedded in the TFM is particularly well-suited to modeling the complexities of longitudinal data. These results suggest our classification-based TFM is an effective framework for dynamic survival analysis.

\begin{table*}[h!]
\centering
\caption{\textbf{Average performance comparison in the dynamic setting.}
Results are aggregated across $5$ dynamic datasets at two discretization granularities ($K=5$ and $K=8$). Within each $K$, results average over prediction times $t_1, t_2, t_3$. MITRA-CF achieves the best overall performance under both granularities, leading in average rank and ELO; the Joint Model is the strongest non-TFM baseline. 
\textit{Values} represent means and standard errors; the best and second-best performers are highlighted in \colorbox{blue!25}{dark blue} and \colorbox{blue!10}{light blue}.
\vspace{-1.5mm}
}
\label{tab:performance_comparison_dynamic_k_20p0}
\resizebox{\textwidth}{!}{
\begin{tabular}{lccccccccccc}
\toprule
\multirow{2}{*}{\textbf{Model}} & \multicolumn{3}{c}{\textbf{C-Index}} & \multicolumn{3}{c}{\textbf{Integrated AUC}} & \multicolumn{3}{c}{\textbf{Integrated Brier Score}} & \multirow{2}{*}{\textbf{Avg. Rank $\downarrow$}} & \multirow{2}{*}{\textbf{Avg. ELO $\uparrow$}} \\
\cmidrule(r){2-4} \cmidrule(r){5-7} \cmidrule(r){8-10}
 & Value $\uparrow$ & Rank $\downarrow$ & ELO $\uparrow$ & Value $\uparrow$ & Rank $\downarrow$ & ELO $\uparrow$ & Value $\downarrow$ & Rank $\downarrow$ & ELO $\uparrow$ & & \\
\midrule
\multicolumn{12}{c}{\textit{$K=5$}} \\
\midrule
\multicolumn{12}{l}{\textit{Classification-based Models}} \\
\textbf{MITRA-CF} & \cellcolor{blue!25}\textbf{0.650}$_{\pm 0.043}$ & \cellcolor{blue!10}5.0$_{\pm 1.4}$ & 1011 & \cellcolor{blue!25}\textbf{0.693}$_{\pm 0.022}$ & \cellcolor{blue!10}3.2$_{\pm 0.5}$ & \cellcolor{blue!10}1058 & \cellcolor{blue!25}\textbf{0.099}$_{\pm 0.018}$ & \cellcolor{blue!25}\textbf{2.2}$_{\pm 0.6}$ & \cellcolor{blue!25}\textbf{1088} & \cellcolor{blue!25}\textbf{3.5} & \cellcolor{blue!25}\textbf{1052} \\
\textbf{TabPFN-CF} & $0.567_{\pm 0.082}$ & $6.0_{\pm 1.7}$ & 982 & $0.632_{\pm 0.048}$ & $6.4_{\pm 1.7}$ & 971 & $0.121_{\pm 0.020}$ & $6.8_{\pm 0.7}$ & 961 & 6.4 & 971 \\
\textbf{XGBoost-CF} & $0.532_{\pm 0.047}$ & $6.6_{\pm 0.8}$ & 972 & $0.600_{\pm 0.042}$ & $6.4_{\pm 1.6}$ & 976 & $0.123_{\pm 0.023}$ & $6.6_{\pm 0.9}$ & 968 & 6.5 & 972 \\

\midrule
\multicolumn{12}{l}{\textit{Discrete-time Hazard Variants}} \\
\textbf{MITRA-DH} & $0.631_{\pm 0.033}$ & \cellcolor{blue!25}\textbf{4.4}$_{\pm 0.7}$ & \cellcolor{blue!25}\textbf{1033} & \cellcolor{blue!10}0.689$_{\pm 0.022}$ & \cellcolor{blue!25}\textbf{2.8}$_{\pm 0.9}$ & \cellcolor{blue!25}\textbf{1070} & $0.107_{\pm 0.023}$ & \cellcolor{blue!10}3.8$_{\pm 1.0}$ & \cellcolor{blue!10}1048 & \cellcolor{blue!10}3.7 & \cellcolor{blue!10}1050 \\
\textbf{TabPFN-DH} & $0.549_{\pm 0.059}$ & $5.2_{\pm 1.2}$ & 1010 & $0.620_{\pm 0.041}$ & $5.8_{\pm 1.1}$ & 993 & $0.110_{\pm 0.018}$ & $4.6_{\pm 0.9}$ & 1021 & 5.2 & 1008 \\
\textbf{XGBoost-DH} & $0.514_{\pm 0.057}$ & $6.8_{\pm 1.4}$ & 962 & $0.594_{\pm 0.038}$ & $6.8_{\pm 1.0}$ & 964 & $0.138_{\pm 0.029}$ & $7.4_{\pm 1.4}$ & 947 & 7.0 & 958 \\
\midrule
\multicolumn{12}{l}{\textit{Classical \& Neural Survival Baselines}} \\
\textbf{Landmark Cox} & $0.615_{\pm 0.027}$ & $5.2_{\pm 1.8}$ & 1008 & $0.611_{\pm 0.077}$ & $5.2_{\pm 1.7}$ & 1009 & $0.130_{\pm 0.031}$ & $5.0_{\pm 1.6}$ & 1019 & 5.1 & 1012 \\
\textbf{Landmark RSF} & $0.603_{\pm 0.032}$ & $5.4_{\pm 1.7}$ & 1004 & $0.605_{\pm 0.066}$ & $5.8_{\pm 1.0}$ & 993 & \cellcolor{blue!10}$0.104_{\pm 0.020}$ & $4.2_{\pm 0.6}$ & 1037 & 5.1 & 1011 \\
\textbf{Dynamic DeepHit} & $0.590_{\pm 0.032}$ & $6.0_{\pm 1.0}$ & 987 & $0.544_{\pm 0.053}$ & $7.8_{\pm 0.6}$ & 941 & $0.219_{\pm 0.026}$ & $9.8_{\pm 0.2}$ & 884 & 7.9 & 937 \\
\textbf{Joint Model} & \cellcolor{blue!10}0.647$_{\pm 0.051}$ & \cellcolor{blue!25}\textbf{4.4}$_{\pm 1.5}$ & \cellcolor{blue!10}1030 & $0.642_{\pm 0.041}$ & $4.8_{\pm 1.5}$ & 1023 & $0.111_{\pm 0.021}$ & $4.6_{\pm 1.3}$ & 1027 & 4.6 & 1027 \\
\midrule
\multicolumn{12}{c}{\textit{$K=8$}} \\
\midrule
\multicolumn{12}{l}{\textit{Classification-based Models}} \\
\textbf{MITRA-CF} & \cellcolor{blue!25}\textbf{0.676}$_{\pm 0.045}$ & \cellcolor{blue!25}\textbf{4.4}$_{\pm 1.5}$ & \cellcolor{blue!25}\textbf{1026} & \cellcolor{blue!25}\textbf{0.636}$_{\pm 0.048}$ & \cellcolor{blue!25}\textbf{3.8}$_{\pm 1.2}$ & \cellcolor{blue!25}\textbf{1043} & \cellcolor{blue!10}0.102$_{\pm 0.021}$ & $3.6_{\pm 0.5}$ & 1048 & \cellcolor{blue!25}\textbf{3.9} & \cellcolor{blue!25}\textbf{1039} \\
\textbf{TabPFN-CF} & $0.589_{\pm 0.075}$ & $5.0_{\pm 1.3}$ & 1013 & $0.575_{\pm 0.047}$ & $5.2_{\pm 1.7}$ & 1009 & $0.137_{\pm 0.025}$ & $7.2_{\pm 1.4}$ & 949 & 5.8 & 990 \\
\textbf{XGBoost-CF} & $0.592_{\pm 0.070}$ & $5.4_{\pm 1.0}$ & 1001 & \cellcolor{blue!10}\textbf{$0.598_{\pm 0.042}$} & \cellcolor{blue!10}\textbf{$4.6_{\pm 0.9}$} & 1026 & $0.114_{\pm 0.020}$ & $5.8_{\pm 0.7}$ & 993 & 5.3 & 1007 \\
\midrule
\multicolumn{12}{l}{\textit{Discrete-time Hazard Variants}} \\
\textbf{MITRA-DH} & $0.630_{\pm 0.041}$ & $5.0_{\pm 1.3}$ & 1011 & $0.582_{\pm 0.092}$ & \cellcolor{blue!10}4.6$_{\pm 1.2}$ & 1023 & $0.117_{\pm 0.036}$ & \cellcolor{blue!10}3.4$_{\pm 1.7}$ & \cellcolor{blue!10}1054 & 4.3 & 1029 \\
\textbf{TabPFN-DH} & $0.578_{\pm 0.079}$ & $5.0_{\pm 1.9}$ & 1014 & $0.583_{\pm 0.030}$ & $6.4_{\pm 1.7}$ & 974 & $0.142_{\pm 0.026}$ & $8.0_{\pm 0.7}$ & 929 & 6.5 & 972 \\
\textbf{XGBoost-DH} & $0.535_{\pm 0.037}$ & $7.0_{\pm 1.1}$ & 958 & $0.546_{\pm 0.066}$ & $7.6_{\pm 0.9}$ & 941 & $0.130_{\pm 0.031}$ & $7.0_{\pm 0.5}$ & 960 & 7.2 & 953 \\
\midrule
\multicolumn{12}{l}{\textit{Classical \& Neural Survival Baselines}} \\
\textbf{Landmark Cox} & $0.644_{\pm 0.022}$ & $5.4_{\pm 1.3}$ & 1007 & $0.575_{\pm 0.071}$ & $5.6_{\pm 1.1}$ & 997 & $0.113_{\pm 0.016}$ & $4.6_{\pm 1.5}$ & 1029 & 5.2 & 1011 \\
\textbf{Landmark RSF} & $0.625_{\pm 0.015}$ & \cellcolor{blue!10}4.8$_{\pm 1.2}$ & \cellcolor{blue!10}1021 & $0.582_{\pm 0.075}$ & $5.4_{\pm 1.1}$ & 1001 & \cellcolor{blue!25}\textbf{0.094}$_{\pm 0.017}$ & \cellcolor{blue!25}\textbf{2.2}$_{\pm 0.6}$ & \cellcolor{blue!25}\textbf{1090} & \cellcolor{blue!10}4.1 & \cellcolor{blue!10}1037 \\
\textbf{Dynamic DeepHit} & $0.528_{\pm 0.075}$ & $8.2_{\pm 0.9}$ & 930 & $0.504_{\pm 0.092}$ & $7.2_{\pm 1.7}$ & 957 & $0.165_{\pm 0.028}$ & $8.6_{\pm 0.4}$ & 919 & 8.0 & 935 \\
\textbf{Joint Model} & \cellcolor{blue!10}0.668$_{\pm 0.053}$ & \cellcolor{blue!10}4.8$_{\pm 1.5}$ & 1019 & $0.594_{\pm 0.043}$ & \cellcolor{blue!10}4.6$_{\pm 1.3}$ & \cellcolor{blue!10}1028 & \cellcolor{blue!10}0.102$_{\pm 0.015}$ & $4.6_{\pm 1.1}$ & 1029 & 4.7 & 1025 \\
\bottomrule
\end{tabular}
}
\vspace{-1.5mm}
\end{table*}

\noindent \bo{Model performance over time (Figure \ref{fig:dynamic_plot_k20}).}
The timepoint-wise trends in Figure \ref{fig:dynamic_plot_k20} show that MITRA-CF's performance lead is most pronounced at $t_2$, across all three metrics. In C-Index, MITRA-CF has the best performance at $t_1$ and $t_2$, but it is surpassed by the Joint Model and Landmark CoxPH at $t_3$. MITRA-CF consistently has superior performance in Integrated AUC over time, and is better than or highly competitive in IBS, particularly when the other neural model, Dynamic DeepHit, struggles.

\begin{figure*}[h!]
    \includegraphics[width=\linewidth]{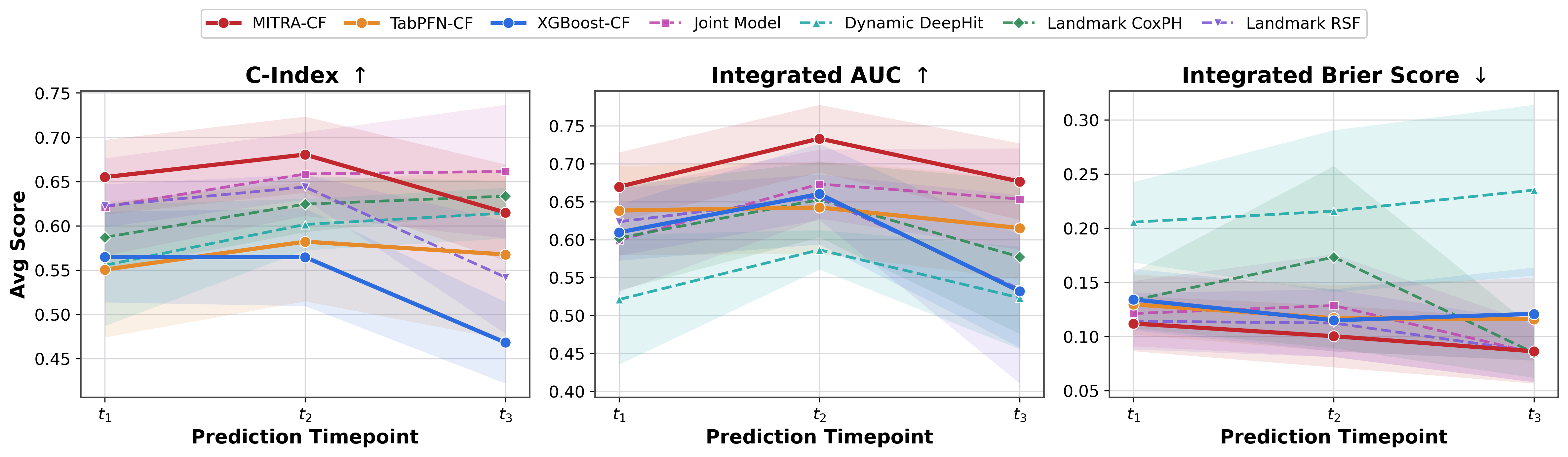}
    \caption{\bo{Performance over time in the dynamic setting.} Timepoint-wise performance across the five longitudinal datasets at timepoints $t_1, t_2, t_3$ ($K=5$ discretization bins). MITRA-CF consistently outperforms both traditional landmarking baselines, joint modeling, and dynamic neural network-based survival models across most timepoints for all metrics. Notably, MITRA-CF achieves a great lead over all other models on all three metrics at $t_2$.
    Baseline models are represented by dashed lines. \vspace{-3mm}
    }
    \label{fig:dynamic_plot_k20}
\end{figure*}
\section{Discussion}
In this work, we demonstrate the utility of a classification-based formulation of survival analysis, specifically as it allows us to leverage pretrained TFMs. Beyond its strong modeling performance, this approach is also computationally efficient. Using a single NVIDIA Tesla A100 GPU (80GB), our static TFM experiments (over $43$ datasets and $4$ time discretization granularities $K$ for both MITRA-CF and TabPFN-CF) took $\approx 30$ minutes. Similarly, the dynamic TFM experiments across $5$ datasets took $\approx 5$ minutes.

\noindent \bo{Limitations.}
Casting continuous event times into $K$ discrete bins introduces an unavoidable loss of temporal resolution, particularly when event times are concentrated or highly non-uniformly distributed. Increasing $K$ mitigates this resolution loss but raises computational cost. 
Additionally, our missing-label treatment of censored observations is theoretically justified only under conditionally independent censoring (Assumptions~\ref{assump:censoring} and \ref{assump:censoring-dynamic}). Informative censoring (where censoring time depends on the latent event time given covariates) and competing risks (where multiple event types can occur) fall outside the regime covered by Theorem~\ref{thm:minimizer-static}. A natural future direction is to investigate approaches to handle dependent censoring.

\noindent \bo{Future work.}
Directions for future work include (1) exploring fine-tuning TFMs explicitly for survival analysis, e.g., to explicitly optimize for ranking or C-Index; (2) investigating ways to leverage TFMs for continuous-time survival modeling rather than relying on discretization; (3) extending our approach to handle informative censoring; and (4) leveraging TFMs for treatment effect analysis in time-to-event outcomes, a common use case for survival analysis \citep{hosmer2008applied}.

\section*{Acknowledgements}
We thank Sonali Parbhoo for feedback on early versions of this work.

\bibliographystyle{plainnat}
\bibliography{references}

\newpage
\appendix
\section{Theoretical Results}
\label{app:theory}

\subsection{Consistent static survival model}
\label{app:proof-minimizer-static}

\begin{customthm}[Theorem \ref{thm:minimizer-static} (Consistent static survival model via loss minimization)]
Let $p$ be any binary prediction model for $Y_{i,k}$ given $(X_{i,0}, t_k)$. Under Assumption \ref{assump:censoring} (Conditionally Independent Censoring), the population loss $\ell_{\TN{static}}(p)$ is minimized if and only if, for each $k \in \{ 1, \dots, K-1 \}$, 
\begin{align*} 
    p(x, t_k) 
    = \PP(Y_{i,k} = 1 \mid X_{i,0} = x) 
    = \PP(T_i \leq t_k \mid X_{i,0} = x)
\end{align*}
for all $x$ in the support of $X_{i,0}$ satisfying the positivity condition $\PP(C_i \geq t_k \mid X_{i,0} = x) > 0$. Note, by convention, we define $0 \log 0 \triangleq 0$ and $x \log 0 = - \infty$ for $x > 0$.
\end{customthm}

\begin{proof}
By the definition of $\ell_{\TN{static}}(p)$ from \eqref{eqn:lossStatic},
\begin{align}
    &\ell_{\TN{static}}(p) = - \E \bigg[ \sum_{k=1}^{K-1} \mathbbm{1}(C_i > t_k) \cdot \TN{BCE}\big( p(X_{i,0}, t_k), Y_{i,k } \big) \bigg] 
    \underbrace{=}_{(a)} - \sum_{k=1}^{K-1} \E \left[ \mathbbm{1}(C_i > t_k) \cdot \TN{BCE}\big( p(X_{i,0}, t_k), Y_{i,k } \big) \right] \nonumber \\
    &\underbrace{=}_{(b)} - \sum_{k=1}^{K-1} \E \bigg[ \E \left[ \mathbbm{1}(C_i > t_k) \cdot \TN{BCE}\big( p(X_{i,0}, t_k), Y_{i,k } \big) \mid X_{i,0} \right] \bigg] \nonumber \\
    &\underbrace{=}_{(c)} - \sum_{k=1}^{K-1} \E \bigg[ \PP(C_i > t_k \mid X_{i,0}) \cdot \E \left[ \TN{BCE}\big( p(X_{i,0}, t_k), Y_{i,k } \big) \mid X_{i,0} \right] \bigg] \nonumber \\
    &\underbrace{=}_{(d)} - \sum_{k=1}^{K-1} \E \bigg[ \PP(C_i > t_k \mid X_{i,0}) \cdot \E \left[ Y_{i,k }\log p(X_{i,0}, t_k) + (1-Y_{i,k}) \log \big(1- p(X_{i,0}, t_k) \big) \mid X_{i,0} \right] \bigg] \nonumber \\
    &\underbrace{=}_{(e)} - \sum_{k=1}^{K-1} \E \bigg[ \PP(C_i > t_k \mid X_{i,0}) \bigg\{ \PP(Y_{i,k} = 1 \mid X_{i,0}) \log p(X_{i,0}, t_k) + \big\{ 1-\PP(Y_{i,k} = 1 \mid X_{i,0}) \big\} \log \big(1- p(X_{i,0}, t_k) \big) \bigg\} \bigg] \label{eqnapp:ellstatic}
\end{align}
Above, equality (a) holds by linearity of expectations. 
Equality (b) holds by the law of iterated expectations.
Equality (c) holds because $C_i \indep Y_{i,k} \mid X_{i,0}$ since $Y_{i,k} \triangleq \mathbbm{1}_{T_{i,k} \leq t_k}$ and $C_i \indep T_i \mid X_{i,0}$ by Assumption \ref{assump:censoring}.
Equality (d) holds by the definition of BCE from Section \ref{sec:staticMethod}. 
Equality (e) holds because since $Y_{i,k}$ is binary, $\E[ Y_{i,k} \mid X_{i,0}] = \PP( Y_{i,k} = 1 \mid X_{i,0})$.

We will now use the result from \eqref{eqnapp:ellstatic} to prove the if and only if statement of the theorem. Note that by Lemma \ref{lemma:minimizer}, 
\begin{align*}
    \PP(Y_{i,k} = 1 \mid X_{i,0} = x) = \argmin_{b \in (0,1)} ~ - \PP(Y_{i,k} = 1 \mid X_{i,0} = x) \log b - \big\{ 1-\PP(Y_{i,k} = 1 \mid X_{i,0} = x) \big\} \log \big(1- b \big)
\end{align*}
and the minimizer is unique. Thus, $p$ is a minimizer of the loss $\ell_{\TN{static}}(p)$ if and only if $p(x, t_k) = \PP(Y_{i,k} = 1 \mid X_{i,0} = x)$ for all $x$ in the support of $X_{i,0}$ for which $\PP(C_i > t_k \mid X_{i,0} = x) > 0$.
\end{proof}

\begin{lemma}
    Fix $q \in [0,1]$ and let $b \in [0,1]$. The function $f(b) = -q \log b - (1 - q) \log(1-b)$ is uniquely minimized at $b = q$.
    \label{lemma:minimizer}
\end{lemma}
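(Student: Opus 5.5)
The plan is to treat the three regimes $q\in(0,1)$, $q=0$, and $q=1$ separately, using the conventions stated with the theorem: $0\log 0\triangleq 0$, and $x\log 0 = -\infty$ for $x>0$ (so $-x\log 0=+\infty$).

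\textbf{Interior case, $q\in(0,1)$.} Here $f(b)=+\infty$ at $b=0$ and at $b=1$, since both coefficients $q$ and $1-q$ are positive. So any minimizer must lie in $(0,1)$. On $(0,1)$ the function $f$ is differentiable with
\begin{align*}
f'(b) = -\frac{q}{b} + \frac{1-q}{1-b} = \frac{b-q}{b(1-b)}.
\end{align*}
This derivative is negative for $b<q$ and positive for $b>q$. Hence $f$ is strictly decreasing on $(0,q]$ and strictly increasing on $[q,1)$, which makes $b=q$ the unique minimizer. Alternatively, $f''(b)=q/b^2+(1-q)/(1-b)^2>0$ gives strict convexity, which yields uniqueness directly.

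\textbf{Boundary cases.} If $q=0$, then $f(b)=-\log(1-b)$. This is $0$ at $b=0$ (using $0\log 0=0$ for the first term), strictly positive for $b\in(0,1)$, and $+\infty$ at $b=1$. So $b=0=q$ is the unique minimizer. The case $q=1$ follows symmetrically by the substitution $b\mapsto 1-b$.

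\textbf{Main obstacle.} The only delicate point is the boundary bookkeeping. I need to check that the conventions make $f$ well defined on all of $[0,1]$, and that no endpoint ties with the interior minimum. Both are handled by the case split above: in the interior case the endpoints are infinite, and in each boundary case the minimizer is itself the finite endpoint.

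One further remark concerns how the lemma is used in the proof of Theorem~\ref{thm:minimizer-static}. There the $\argmin$ is written over $b\in(0,1)$. If $q\in\{0,1\}$, the minimizer lies at the boundary, so the lemma should be applied on $[0,1]$, as stated, rather than on $(0,1)$.
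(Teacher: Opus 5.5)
Your proof is correct and follows essentially the same route as the paper's: the same split into $q\in\{0,1\}$ versus $q\in(0,1)$, with the interior case settled by $f'(b)=0 \iff b=q$ together with $f''>0$. You are also more careful than the paper at the endpoints. You check that $f=+\infty$ at $b\in\{0,1\}$ when $q\in(0,1)$, and you give the uniqueness argument in the boundary cases that the paper only asserts; your remark that the minimization over $b\in(0,1)$ in the static proof should be taken over $[0,1]$ is correct.
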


\begin{proof}
In the edge case that $q = 0$, $f(b)$ is minimized at $b = 0$. Also, when $q = 1$, $f(b)$ is minimized at $b = 1$.
Now suppose $q \in (0, 1)$. Note that $f'(b) = -\frac{q}{b} + \frac{1-q}{1-b}$. When $f'(b) = 0$, we get that $b = q$. This must be the global minimizer of $f(b)$ because the second derivative of $f(b)$ is strictly positive: $f''(b) = \frac{q}{b^2} + \frac{1-q}{(1-b)^2} > 0$.
\end{proof}

\subsection{Consistent dynamic survival model}
\label{app:proof-minimizer-dynamic}

Analogously to the static setting, in the dynamic setting, we can show that as the amount of training samples $n \to \infty$, minimizing the loss from \eqref{eqn:empirical-loss-dynamic} will necessarily lead the classifier to learn the true underlying survival probabilities. Below, we define $\ell_{\text{dynamic}}(p)$, which is the population version of the loss from \eqref{eqn:empirical-loss-dynamic}. 
\begin{align}
    \ell_{\text{dynamic}}(p) &\triangleq \E \bigg[ \sum_{k=0}^{K-2} \sum_{\Delta=1}^{K-k-1} \mathbbm{1}(t_{k} < T_i) \cdot \mathbbm{1}(t_{k+\Delta} < C_i)
    \cdot \text{BCE} \big( p(\mathcal{H}_{i,t_k}, Y_{i,1:k}, t_{k+\Delta}), Y_{i,k+\Delta} \big) \bigg]. \label{eqn:lossDynamic}
\end{align}
Theorem \ref{thm:minimizer-dynamic} formalizes how minimizing the loss $\ell_{\text{dynamic}}(p)$ is equivalent to learning the underlying survival probabilities, despite censoring, given that the dynamic censoring Assumption \ref{assump:censoring-dynamic} holds.
\begin{theorem}[Consistent dynamic survival model via loss minimization]
    \label{thm:minimizer-dynamic}
    Let $p$ be any binary prediction model for $Y_{i,k+\Delta}$ given $(\HH_{i,t_{k}},  Y_{i,1:k}, t_k, t_{k+\Delta}))$. Under Assumption \ref{assump:censoring-dynamic} (Dynamic Conditionally Independent Censoring), the population loss $\ell_{\TN{dynamic}}(p)$ is minimized if and only if, for each $k \in \{ 1, \dots, K-2 \}$ and $\Delta \in \{ 1, \dots, K-k-1 \}$,
    \begin{align*} 
        p(h, y, t_k, t_{k+\Delta}) 
        = \PP(Y_{i,k+\Delta} = 1 \mid \HH_{i,t_{k}}=h, Y_{i,1:k}=y) 
        = 1 - \PP(T_i > t_{k+\Delta} \mid \HH_{i,t_{k}}=h, Y_{i,1:k}=y)
    \end{align*}
    for all $h, y_{1:k}$  in the support of $\HH_{i,t_{k}}, Y_{i,1:k}$ that (i) satisfy the positivity condition $\PP(C_i \geq t_{k+\Delta} \mid \HH_{i,t_{k}}=h, Y_{i,1:k}=y) > 0$ and (ii) have not had the event occur by $t_k$, i.e., $y_1 = \dots = y_k = 0$. Note, by convention, we define $0 \log 0 \triangleq 0$ and $x \log 0 = - \infty$ for $x > 0$.
\end{theorem}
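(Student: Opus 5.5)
The proof follows the static argument for Theorem \ref{thm:minimizer-static}. We decompose $\ell_{\TN{dynamic}}(p)$ from \eqref{eqn:lossDynamic} into a sum of conditional expectations, one per pair $(k,\Delta)$. In each term the censoring indicator factors out by Assumption \ref{assump:censoring-dynamic}. Lemma \ref{lemma:minimizer} is then applied pointwise in the conditioning variables.

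\textbf{Decomposition and the at-risk indicator.} By linearity of expectation, $\ell_{\TN{dynamic}}(p)=\sum_{k}\sum_{\Delta}L_{k,\Delta}(p)$, where
\[
L_{k,\Delta}(p)=\E\big[\mathbbm{1}(t_k<T_i)\,\mathbbm{1}(t_{k+\Delta}<C_i)\,\TN{BCE}(p(\HH_{i,t_k},Y_{i,1:k},t_k,t_{k+\Delta}),Y_{i,k+\Delta})\big].
\]
Each $L_{k,\Delta}$ depends on $p$ only through its values at the argument $(\cdot,\cdot,t_k,t_{k+\Delta})$. Because the time arguments $(t_k,t_{k+\Delta})$ separate the pairs, the terms can be minimized independently, so $p$ minimizes $\ell_{\TN{dynamic}}$ if and only if it minimizes every $L_{k,\Delta}$.

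Next we note that $\{t_k<T_i\}=\{Y_{i,1:k}=0\}$, which is measurable with respect to the conditioning variables $(\HH_{i,t_k},Y_{i,1:k})$. Conditioning on these variables gives
\[
L_{k,\Delta}(p)=\E\big[\mathbbm{1}(Y_{i,1:k}=0)\,\E[\mathbbm{1}(t_{k+\Delta}<C_i)\,\TN{BCE}(\cdot)\mid \HH_{i,t_k},Y_{i,1:k}]\big].
\]
Only histories with $y=0$ contribute, which accounts for condition (ii) of the theorem.

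\textbf{Factoring out censoring.} On the event $\{Y_{i,1:k}=0\}=\{T_i>t_k\}$, Assumption \ref{assump:censoring-dynamic} applied at time $t_k$ gives $T_i\indep C_i\mid(\HH_{i,t_k},T_i>t_k)$. Since $Y_{i,k+\Delta}$ is a function of $T_i$, it is conditionally independent of $\mathbbm{1}(t_{k+\Delta}<C_i)$. The inner conditional expectation therefore splits as
\[
\PP(C_i>t_{k+\Delta}\mid \HH_{i,t_k},Y_{i,1:k}=0)\cdot\big\{-q\log p-(1-q)\log(1-p)\big\},
\]
where $q=\PP(Y_{i,k+\Delta}=1\mid\HH_{i,t_k},Y_{i,1:k}=0)$.

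This step is the main point needing care. The assumption is stated for conditioning on $(\HH_{i,t},T_i>t)$. Conditioning on $Y_{i,1:k}=0$ is the same as conditioning on $T_i>t_k$, so we must check that $\sigma(\HH_{i,t_k},Y_{i,1:k})$ restricted to this event coincides with the conditioning in the assumption. It does, because on this event $Y_{i,1:k}$ is constant.

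\textbf{Pointwise minimization.} The weight $\PP(C_i>t_{k+\Delta}\mid\cdot)$ is nonnegative. By Lemma \ref{lemma:minimizer}, the bracketed term is uniquely minimized at $p=q$. Hence $L_{k,\Delta}$ is minimized exactly when $p(h,0,t_k,t_{k+\Delta})=q(h)$ for $\PP$-almost every $h$ at which the weight is positive, which is condition (i). At points where the weight vanishes, or where $y\neq 0$, $p$ is unconstrained, which gives the stated ``if and only if'' on that restricted set. Finally, $q=1-\PP(T_i>t_{k+\Delta}\mid\HH_{i,t_k}=h,Y_{i,1:k}=y)$ by the definition of $Y_{i,k+\Delta}$.

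The positivity condition should be read in its strict form, $C_i>t_{k+\Delta}$, to match the loss indicator. We also adopt the paper's conventions for $0\log 0$, so that the boundary cases $q\in\{0,1\}$ are covered by Lemma \ref{lemma:minimizer}.
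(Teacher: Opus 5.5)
Your proposal is correct and follows essentially the same route as the paper's proof: linearity over $(k,\Delta)$, conditioning on $(\HH_{i,t_k},Y_{i,1:k})$ so that $\mathbbm{1}(T_i>t_k)$ comes outside, factoring out $\PP(C_i>t_{k+\Delta}\mid\HH_{i,t_k},Y_{i,1:k})$ via Assumption~\ref{assump:censoring-dynamic}, and applying Lemma~\ref{lemma:minimizer} pointwise. Your extra remarks are correct refinements rather than a different argument: positivity must be read with the strict inequality $C_i>t_{k+\Delta}$ to match the loss (the statement writes $\geq$), the identity is forced only almost everywhere, and your decomposition also forces it for $k=0$, which the loss includes but the stated range $\{1,\dots,K-2\}$ omits.
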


\begin{proof}
By the definition of $\ell_{\TN{dynamic}}(p)$ from \eqref{eqn:lossDynamic}, 
\begin{align}
    &\ell_{\TN{dynamic}}(p) = - \E \bigg[ \sum_{k=0}^{K-2} \sum_{\Delta=1}^{K-k-1} \mathbbm{1}(T_i > t_{k}) \cdot \mathbbm{1}(C_i > t_{k+\Delta}) \cdot \TN{BCE}\big( p(\mathcal{H}_{i,t_k}, Y_{i,1:k}, t_k, t_{k+\Delta}), Y_{i,k+\Delta} \big) \bigg] \nonumber \\
    &\underbrace{=}_{(a)} - \sum_{k=0}^{K-2} \sum_{\Delta=1}^{K-k-1} \E \left[ \mathbbm{1}(T_i > t_{k}) \cdot \mathbbm{1}(C_i > t_{k+\Delta}) \cdot \TN{BCE}\big( p(\mathcal{H}_{i,t_k}, Y_{i,1:k}, t_k, t_{k+\Delta}), Y_{i,k+\Delta} \big) \right] \nonumber \\
    &\underbrace{=}_{(b)} - \sum_{k=0}^{K-2} \sum_{\Delta=1}^{K-k-1} \E \bigg[ \mathbbm{1}(T_i > t_{k}) \cdot \E \left[ \mathbbm{1}(C_i > t_{k+\Delta}) \cdot \TN{BCE}\big( p(\mathcal{H}_{i,t_k}, Y_{i,1:k}, t_k, t_{k+\Delta}), Y_{i,k+\Delta} \big) \mid \mathcal{H}_{i,t_k}, Y_{i,1:k} \right] \bigg] \nonumber \\
    &\underbrace{=}_{(c)} - \sum_{k=0}^{K-2} \sum_{\Delta=1}^{K-k-1} \E \bigg[ \mathbbm{1}(T_i > t_{k}) \cdot \PP \left( C_i > t_{k+\Delta} \mid \mathcal{H}_{i,t_k}, Y_{i,1:k} \right) \cdot \E \left[ \TN{BCE}\big( p(\mathcal{H}_{i,t_k}, Y_{i,1:k}, t_k, t_{k+\Delta}), Y_{i,k+\Delta} \big) \mid \mathcal{H}_{i,t_k}, Y_{i,1:k} \right] \bigg] \nonumber \\
    &\underbrace{=}_{(d)} - \sum_{k=0}^{K-2} \sum_{\Delta=1}^{K-k-1} \E \bigg[ \mathbbm{1}(T_i > t_{k}) \cdot \PP \left( C_i > t_{k+\Delta} \mid \mathcal{H}_{i,t_k}, Y_{i,1:k} \right) \nonumber \\
    & \qquad 
    \times \E \big[ Y_{i,k+\Delta} \log p(\mathcal{H}_{i,t_k}, Y_{i,1:k}, t_k, t_{k+\Delta}) + (1-Y_{i,k+\Delta}) \log \big(1- p(\mathcal{H}_{i,t_k}, Y_{i,1:k}, t_k, t_{k+\Delta}) \big) \mid \mathcal{H}_{i,t_k}, Y_{i,1:k} \big] \bigg] \nonumber \\
    &\underbrace{=}_{(e)} - \sum_{k=0}^{K-2} \sum_{\Delta=1}^{K-k-1} \E \bigg[ \mathbbm{1}(T_i > t_{k}) \cdot \bigg\{ \PP \left( C_i > t_{k+\Delta} \mid \mathcal{H}_{i,t_k}, Y_{i,1:k} \right) \cdot \PP(Y_{i,k+\Delta} = 1 \mid \mathcal{H}_{i,t_k}, Y_{i,1:k})  \log p(\mathcal{H}_{i,t_k}, Y_{i,1:k}, t_k, t_{k+\Delta}) \nonumber \\
    &\qquad \qquad \qquad + \big\{ 1 - \PP(Y_{i,k+\Delta} = 1 \mid \mathcal{H}_{i,t_k}, Y_{i,1:k}) \big\} \log \big(1- p(\mathcal{H}_{i,t_k}, Y_{i,1:k}, t_k, t_{k+\Delta}) \big) \mid \mathcal{H}_{i,t_k}, Y_{i,1:k} \big] \bigg\} \bigg] \label{eqnapp:elldynamic}
\end{align}
Above, equality (a) holds by linearity of expectations. 
Equality (b) holds by the law of iterated expectations and since $\mathbbm{1}(T_i > t_{k})$ is known given $\HH_{i,t_k}, Y_{i,1:k}$.
Equality (c) holds because $C_i \indep Y_{i,k+\Delta} \mid \mathcal{H}_{i,t_k}, Y_{i,1:k}$ since $Y_{i,k+\Delta} \triangleq \mathbbm{1}_{T_{i,k+\Delta} \leq t_k}$ and $T_i \indep C_i \mid (\HH_{i,t}, T_i > t)$ by Assumption \ref{assump:censoring-dynamic}. 
Equality (d) holds because since $Y_{i,k+\Delta}$ is binary, $\E[ Y_{i,k+\Delta} \mid \mathcal{H}_{i,t_k}, Y_{i,1:k}] = \PP( Y_{i,k+\Delta} = 1 \mid \mathcal{H}_{i,t_k}, Y_{i,1:k})$.

We will now use the result from \eqref{eqnapp:elldynamic} to prove the if and only if statement of the theorem. Note that by Lemma \ref{lemma:minimizer},
\begin{align*}
    \PP(Y_{i,k+\Delta} = 1 \mid \mathcal{H}_{i,t_k}=h, Y_{i,1:k}=y) &= \argmin_{b \in [0,1]} \big\{ \PP(Y_{i,k+\Delta} = 1 \mid \mathcal{H}_{i,t_k}=h, Y_{i,1:k}=y) \log b \\
    &\qquad \qquad \qquad + \big\{ 1-\PP(Y_{i,k+\Delta} = 1 \mid \mathcal{H}_{i,t_k}=h, Y_{i,1:k}=y) \big\} \log \big(1- b \big) \big\},
\end{align*}
and the minimizer is unique. Thus, $p$ is a minimizer of the loss $\ell_{\TN{dynamic}}(p)$ if and only if $p(\mathcal{H}_{i,t_k}=h, Y_{i,1:k}=y, t_k, t_{k+\Delta}) = \PP(Y_{i,k+\Delta} = 1 \mid \mathcal{H}_{i,t_k}=h, Y_{i,1:k}=y)$ if and only if for all $h, y$ in the support of $\mathcal{H}_{i,t_k}, Y_{i,1:k}$ for which $\PP(C_i > t_k \mid \mathcal{H}_{i,t_k}=h, Y_{i,1:k}=y) > 0$ and for which the event has not occurred by $t_k$, i.e., $y_1 = \dots = y_k = 0$.
\end{proof}
\section{Metrics}
\label{app:metrics}

This appendix provides formal definitions, corresponding estimators and implementation details for the evaluation metrics used in the \emph{static setting} and \emph{dynamic setting}.
All survival metrics are implemented using the \texttt{sksurv} library \citep{polsterl2020scikit}. We also briefly explain how we calculate the ELO rating in Section \ref{sec:ELO}. See our codebase for additional details. 

\subsection{Static Metrics}

\subsubsection{Concordance Index (C-Index)}
\label{app:static_cindex}
The concordance index (C-Index) is a measure of a model's ability to accurately rank individuals by their risk of experiencing an event \citep{harrell1982evaluating,uno2011c}. We say a pair of samples is concordant if the individual with a shorter time-to-event has a higher risk score assigned by a model. 

Recall from display~\eqref{eqn:staticcindex}, the C-Index measures the following in the static setting
\begin{equation}
\label{eq:cindex_static}
C_{\TN{Index}} \triangleq \mathbb{P}\!\left(\hat{r}(X_{i,0}) > \hat{r}(X_{j,0}) \mid T_i < T_j\right).
\end{equation}
where $\hat r$ is a risk scoring model. A common approach to correct for censoring is by using Inverse-Probability-of-Censoring-Weights (IPCW),  which provides an unbiased and consistent estimator, under certain assumptions, including Assumption~\ref{assump:censoring} \citep{uno2011c}. We first consider a truncated version of display~\eqref{eq:cindex_static}:
\begin{equation}
\label{eq:cindex_target}
C_{\TN{Index}} \triangleq
\mathbb{P} \left(\hat{r}(X_{i,0}) > \hat{r}(X_{j,0}) \,\middle|\, T_i < T_j, \; T_i \leq T_{\TN{max}} \right),
\end{equation}
where we set $T_{\TN{max}}$ to be the maximum observed timepoint in the train set. Truncation is used to ensure that the predicted probability of not being censored remains positive, which is required when using IPCW weighting. Following \citet{uno2011c}, an IPCW estimator of display~\eqref{eq:cindex_target} is
\begin{equation*}
\label{eq:c_index_ipcw}
\hat{C}_{\TN{Index}} = 
\frac{ \sum_{i \not= j} \hat{G}(T_i)^{-2} \delta_i \mathbbm{1}\big(T_i < T_j, T_i \leq T_{\TN{max}} \big) \mathbbm{1}\big(\hat{r}(X_{i,0}) > \hat{r}(X_{j,0}) \big) }{ \sum_{i \not= j} \delta_i \hat{G}(T_i)^{-2} \mathbbm{1}\big(T_i < T_j, T_i \leq T_{\TN{max}} \big) },  
\end{equation*}
where $G(t)$ denotes the population censoring function at time $t$. Formally,
\begin{equation}
    \label{eqn:censored_survival}
    G(t) \triangleq \PP ( C_i > t),
\end{equation}
and $\hat G(t)$ is estimated using the Kaplan-Meier estimator \citep{kaplan1958nonparametric,polsterl2020scikit}.

\subsubsection{Integrated AUC}
\label{app:static_auc}
The AUC (area under the ROC curve) at time $t$ is a measure of how well a model can distinguish individuals who experience an event by time $t$ from those who do not. The weighted mean of these AUCs tells us about the overall discriminative ability of a survival model.

Recall from display~\eqref{eqn:static_auc}, the time-dependent AUC measures the following in the static setting
\begin{equation}
\label{eq:auc_target}
\mathrm{AUC}(t) \triangleq
\mathbb{P}\!\left(\hat{r}(X_{i,0}) > \hat{r}(X_{j,0}) \,\middle|\,
T_i \leq t, \; T_j > t\right).
\end{equation}

A consistent IPCW estimate \citep{hung2010estimation} of display~\eqref{eq:auc_target} is given by
\begin{equation}
\widehat{\mathrm{AUC}}(t) = 
\frac{ \sum_{i\not=j} \hat{G}(T_i)^{-1} \, \delta_i \,\mathbbm{1}\big(T_i \leq t, \; T_j > t \big) \,\mathbbm{1}\big(\hat{r}(X_{i,0}) > \hat{r}(X_{j,0}) \big)}{ \sum_{i\not=j} \hat{G}(T_i)^{-1} \, \delta_i \,\mathbbm{1} \big(T_i \leq t, \; T_j > t\big)},
\label{eqn:AUCest}
\end{equation}
where ${G}(t)$ is the censoring function at time $t$; see display~\eqref{eqn:censored_survival}. We evaluate display~\eqref{eqn:AUCest} at $t_k$ for $k \in \{1, \dots, K-1\}$, and compute a weighted average of the resulting AUCs, using the estimated survival function as weights
\begin{equation*}
\overline{\mathrm{AUC}}
=
\frac{\sum_{k'=1}^{K-1} \widehat{\mathrm{AUC}}(t_{k'})\, w_{t_{k'}}}
{\sum_{k'=1}^{K-1} w_{t_{k'}}},
\label{eqn:aveAUC}
\end{equation*}
where $w_{t_k} = \hat S(t_{k-1})-\hat S(t_{k})$, and $\hat{S}(t)$ is the Kaplan-Meier estimator of the marginal survival function: $S(t) \triangleq \PP (T_i > t)$ \citep{kaplan1958nonparametric}.

\subsubsection{Integrated Brier Score (IBS)}

The Integrated Brier Score quantifies a model's calibration and accuracy by averaging over time, the squared error between the predicted survival probabilities and the observed survival outcomes.
Recall from display~\eqref{eqn:static_brier}, the time-dependent Brier score at time $t$ is defined as the mean squared error between the binary outcome $\mathbbm{1}(T_i>t)$ and the predicted survival function
\begin{equation}
\label{eq:bs_target}
\mathrm{BS}(t)
\triangleq
\mathbb{E}\Big[\big(\mathbbm{1}(T_i>t)-\hat{S}(t\mid X_{i,0})\big)^2\Big].
\end{equation}
A standard IPCW estimator of display~\eqref{eq:bs_target} from \citet{kvamme2023brier} is
\begin{equation}
\label{eq:bs_def}
\widehat{\mathrm{BS}}(t) =
\frac{1}{n}\sum_{i=1}^n
\left[
\frac{\mathbbm{1}(T_i \le t,\,\delta_i=1)}{\hat{G}(T_i)}
\big(0-\hat{S}(t \mid X_{i,0})\big)^2
+
\frac{\mathbbm{1}(T_i > t)}{\hat{G}(t)}
\big(1-\hat{S}(t \mid X_{i,0})\big)^2
\right],
\end{equation}
where ${G}(t)$ is the censoring function at time $t$, defined in display~\eqref{eqn:censored_survival}.  We estimate the Integrated Brier Score by evaluating display~\eqref{eq:bs_def} at time points 
$t_k$, $k \in \{1,\ldots,K-1\}$, and then integrating these values over time
\begin{equation*}
\label{eq:ibs_target}
    \overline{\TN{BS}} = \frac{1}{t_{K-1}-t_1} \int_{t_1}^{t_{K-1}} \widehat{\mathrm{BS}}(t) \, d t.
\end{equation*}
We use the trapezoidal rule to approximate the integral.

\subsection{Dynamic Metrics}
\label{app:dynnamic_metrics}

\paragraph{Risk scores.}
A risk score is a real-valued prediction of an individual's risk, where higher values correspond to a greater risk of experiencing the event. We denote the risk scoring model by $\hat r$. In the static setting, $\hat r (X_{i,0})$ denotes a risk score for individual $i$ given baseline covariates $X_{i,0}$. In the dynamic setting, $\hat r (\HH_{i,t_k})$ denotes the time-dependent risk score for individual $i$ at time $t_k$, computed using the history available up to time $t_k$. 

\subsubsection{Concordance Index (C-Index)}
In the dynamic setting, the C-Index at time $t$ is defined as
\begin{equation}
\label{eq:cindex_dynamic}
C_{\TN{Index}}(t) \triangleq \mathbb{P}\!\left(\hat{r}(\HH_{i,t}) > \hat{r}(\HH_{j,t}) \mid t <T_i < T_j \right),
\end{equation}
where the risk scores are computed using history available up to $t$. Analogous to the static setting (see Appendix \ref{app:static_cindex}), we can define a truncated, IPCW estimator of display~\eqref{eq:cindex_dynamic}

\begin{equation*}
\hat{C}_{\TN{Index}} (t) =
\frac{ \sum_{i \not= j} \hat{G}(T_i\mid t)^{-2} \delta_i \mathbbm{1}\big(t < T_i < T_j, \;T_i \leq T_{\TN{max}} \big) \mathbbm{1}\big(\hat{r}(\HH_{i,t}) > \hat{r}(\HH_{j,t}) \big) }{ \sum_{i \not= j} \delta_i \hat{G}(T_i \mid t)^{-2} \mathbbm{1}\big(t < T_i < T_j, \;T_i \leq T_{\TN{max}} \big) }, 
\end{equation*}
where $G(t + \Delta \mid t)$ denotes the conditional censoring function, or formally
\begin{equation}
\label{eqn:conditional_censoring}
G(t + \Delta \mid t) \triangleq \PP(C_i > t+\Delta \mid C_i > t),
\end{equation}
which is the probability of remaining uncensored up to time $t + \Delta$ given being uncensored at time $t$. Similar to the static setting, $\hat G(t + \Delta \mid t)$ is estimated using the Kaplan-Meier estimator \citep{kaplan1958nonparametric,polsterl2020scikit}.

\subsubsection{Integrated AUC}
In the dynamic setting, the time dependent AUC at time $t + \Delta$, given survival up to $t$ is given by
\begin{equation}
\mathrm{AUC}(t + \Delta \mid t) \triangleq \mathbb{P} \left( \hat{r}(\HH_{i,t}) > \hat{r}(\HH_{j,t})  \mid t < T_i \leq t + \Delta < T_j\right)
\label{eqn:dynamicAUC}  
\end{equation}

Analogous to the static setting (see Appendix~\ref{app:static_auc}), we form the following IPCW estimate of display~\eqref{eqn:dynamicAUC}
\begin{equation*}
\widehat{\mathrm{AUC}}(t+\Delta \mid t) = 
\frac{ \sum_{i\not=j} \hat{G}(T_i \mid t)^{-1} \, \delta_i \,\mathbbm{1}\big(t < T_i \leq t+\Delta < T_j \big) \,\mathbbm{1}\big(\hat{r}(\HH_{i,t}) > \hat{r}(\HH_{j,t}) \big)}{ \sum_{i\not=j} \hat{G}(T_i\mid t)^{-1} \, \delta_i \,\mathbbm{1} \big(t < T_i \leq t+ \Delta < T_j\big)},
\label{eqn:AUCest_dynamic}
\end{equation*}
where ${G}(t + \Delta \mid t)$ is the conditional censoring function; see display~\eqref{eqn:conditional_censoring}. 

At each conditioning time $t_k$, we select the future evaluation timepoints $t_{k+\Delta}$ from the discretized horizons. 
For example, if the time grid is $t_1,\ldots,t_4$ and we condition on survival up to $t_2$, we compute the time dependent AUC for $\mathrm{AUC}(t_3 \mid t_2)$ and $\mathrm{AUC}(t_4 \mid t_2)$, which assess the accuracy of the predicted survival probabilities at times $t_3$ and $t_4$ given information available up to $t_2$. 

Under our discretized setting, we estimate the Integrated AUC at $t_k$ by
\begin{equation*}
\overline{\mathrm{AUC}}(t_k)
=
\frac{\sum_{\Delta=1}^{K-k-1} \widehat{\mathrm{AUC}}(t_{k+\Delta} \mid t_k)\, w_{t_{k+\Delta}}}
{\sum_{\Delta=1}^{K-k-1} w_{t_{k+\Delta}}},
\label{eqn:aveAUC_dynamic}
\end{equation*}

where $w_{t_{k+\Delta}} = \hat S(t_{k +\Delta -1} \mid t_k)-\hat S(t_{k+\Delta}\mid t_k)$, and $\hat{S}( t + \Delta \mid t)$ is the Kaplan–Meier estimator of the population survival function at $t + \Delta$, given survival up to $t$, i.e. $S(t + \Delta \mid t) \triangleq \PP (T_i > t + \Delta \mid T_i > t)$. 

\subsubsection{Integrated Brier Score (IBS)}
In the dynamic setting, the time dependent Brier Score at time $t +\Delta$, given survival up to $t$ can be defined as
\begin{equation}
\label{eq:bs_dynamic}
\TN{BS}(t+\Delta\mid t) \triangleq
\mathbb{E}
\Big[
\Big(
\mathbbm{1}\{T_i > t + \Delta\}
-
\hat{S}(t+\Delta \mid T_i > t, \;\HH_{i,t})
\Big)^2 \Big]
.   
\end{equation}
An analogous estimator for display~\eqref{eq:bs_dynamic} is 
\begin{multline*}
\label{eq:bs_dynamic_est}
\widehat{\TN{BS}}(t + \Delta \mid t)
=
\frac{1}{n}\sum_{i=1}^n
\Bigg[
\frac{\mathbbm{1}(t <T_i \le t +\Delta,\,\delta_i=1)}{\hat{G}(T_i \mid t)}
\big(0-\hat{S}(t +\Delta \mid T_i > t, \;\HH_{i,t})\big)^2 
\\
+\frac{\mathbbm{1}(T_i > t+ \Delta)}{\hat{G}(t+ \Delta \mid t)}
\big(1-\hat{S}(t + \Delta \mid T_i > t, \;\HH_{i,t})\big)^2
\Bigg],
\end{multline*}
where ${G}(t+ \Delta\mid t)$ is the conditional censoring function, as in display~\eqref{eqn:conditional_censoring}.

Similar to the dynamic Integrated AUC, for each conditioning time $t_k$, we select the future evaluation timepoints $t_{k+\Delta}$ from the discretized horizons. Under this discretized setting, the Integrated Brier Score at time $t_k$ can be estimated by 
\begin{equation*}
\label{eq:ibs_target_dynamic}
    \overline{\TN{BS}}(t_k) = \frac{1}{t_{K-1}-t_{k+1}} \int_{t_{k+1}}^{t_{K-1}} \widehat{\mathrm{BS}}(t \mid t_k) \, d t.
\end{equation*}

\subsection{ELO Rating}
\label{sec:ELO}
The Elo rating system is a method for ranking competitors based on the outcomes of pairwise comparisons. Each model is assigned a numerical rating that reflects its relative performance, with higher values indicating better performance. All models start from the same initial rating (which we set to 1000), and their ratings are updated after each comparison depending on whether a model performs better or worse than expected given its current rating. We implement Elo using the \texttt{elote} Python package\footnote{\url{https://pypi.org/project/elote/}} with the default $\tilde{K}$-factor of 32, which controls how strongly each comparison influences the ratings.

\paragraph{Static setting.}
In the static setting, performance is reported for multiple values of time discretization granularities $K \in \{5, 10, 15, 20\}$. We first average each model's metric values across all $K$ values within each dataset. Using these aggregated values, we build one Elo arena per dataset and compare all models pairwise. The final static Elo score is the average of a model's Elo ratings across datasets.

\paragraph{Dynamic setting.}
In the dynamic setting, performance is reported at multiple time points across $K \in \{5, 8\}$. We aggregate each model's metric values over time points $t_1, t_2, t_3$ within each dataset and across both granularities. Using these aggregated values, we again build one Elo arena per dataset and compare all models pairwise. The final dynamic Elo score is the average of a model's Elo ratings across datasets.

\section{Modeling Details}
\label{app:baseline}

\subsection{Static Models}
\label{app:static_baselines}
\subsubsection{Classification-based models}
\label{app:classificationModel} 

\bo{Subsampling.}
The context windows of tabular foundation models are generally limited.
To manage this, after constructing the supervised learning dataset as described in Section~\ref{sec:staticMethod}, we randomly subsample the training set to the maximum number of rows supported by each model when needed ($10,000$ for MITRA and $50,000$ for TabPFN); no such limit is imposed for XGBoost\footnote{\url{https://pypi.org/project/xgboost/}}.

\bo{Clipping survival probabilities.} 
To ensure our survival probabilities are non-increasing in time, we apply a post-processing step that enforces monotonicity over time. Specifically, we refine our definition of $\hat S(t_k \mid X_{i,0})$ from display~\eqref{eqn:static_inference} as follows:
\begin{equation*}
\hat  S(t_k \mid X_{i,0})
\triangleq
\min_{j \le k} ~ 1- \hat{p}(X_{i,0}, t_k).
\end{equation*}
This ensures that $\hat S(t_1 \mid X_{i,0})
\ge
\hat S(t_2 \mid X_{i,0})
\ge
\cdots
\ge
\hat S(t_{K-1} \mid X_{i,0})$.
Our definition of the risk score from display~\eqref{eqn:staticRisk}  uses these clipped survival probabilities.

\subsubsection{Alternative classification formulations}
\label{app:classification-formulations}
To isolate the effect of our \emph{direct} formulation $\PP(T_i > t_k \mid X_{i,0})$ from the effect of the underlying classifier, we additionally evaluate \emph{discrete-time hazard} (DH) and \textit{multi-class} (MC) variants of all three classifiers: MITRA-DH/TabPFN-DH/XGBoost-DH and MITRA-MC/TabPFN-MC/XGBoost-MC. These variants share the same data preprocessing, time discretization, subsampling, and hyperparameter selection procedures as their direct counterparts (Section~\ref{app:classificationModel}); the only difference is the prediction target and the inference formula. We evaluate the DH and MC variants at the same $K \in \{5, 10, 15, 20\}$ as the direct variants. 

\paragraph{Discrete-time hazard (DH) classification formulation.}
This formulation is based on the following works: \citet{efron1988logistic,allison1982discrete,craig2025review,zhong2019survival}. For each example $i$ and each bin index $k \in \{1,\dots,K-1\}$, we form a training tuple
\[
    \big((X_{i,0}, t_k),\ Z_{i,k}\big),
    \quad \TN{where}~
    Z_{i,k} \triangleq \mathbbm{1}\big( T_i \in (t_{k-1}, t_k] \big),
\]
i.e., the indicator that the event occurred in the $k$th interval among subjects at risk at $t_{k-1}$. We train the classifier to predict $Z_{i,k}$ given $(X_{i,0}, t_k)$ using a binary cross-entropy loss to obtain $\hat\lambda(t_k\mid X_{i,0}) \triangleq \PP (T_i \in (t_{k-1}, t_k] \mid X_{i,0}, T_i > t_{k-1})$, i.e., an estimate of the discrete-time hazard.

Given the estimated hazards, we reconstruct the survival probability via the classical multiplicative product
\[
    \hat S(t_k \mid X_{i,0}) = \prod_{j=1}^{k} \big( 1 - \hat\lambda(t_j \mid X_{i,0}) \big),
    \qquad k = 1, \dots, K-1.
\]
The risk score is defined analogously to the direct formulation, replacing the directly modeled survival probability with the reconstructed one in display~\eqref{eqn:staticRisk}.

\paragraph{Multi-class (MC) classification  formulation.}
This formulation is based on that used in \citet{qi2026survivalpfn}. For each example $i$, we form a training tuple
\[
    \big((X_{i,0}, \delta_i),\ Q_i \big),
    \quad \TN{where}~
    Q_i \triangleq 
         k \quad \TN{if}~ \min(C_i, T_i) \in (t_{k-1}, t_k].
\]
We train the classifier $\hat{p}( Q_i = k \mid X_{i,0}, \delta_i )$ to predict $Q_i$ given $(X_{i,0}, \delta_i)$ using a multi-class cross-entropy loss.
At inference time, estimate the survival probability as follows (note that we do not require knowing the true $\delta_i$ to form survival probabilities, we always input $\delta=1$ into the model):
\begin{align*}
    \hat S(t_k \mid X_i) = 1 - \sum_{j=1}^k \hat{p}( Q_i = j \mid X_{i,0}, \delta = 1 ).
\end{align*}
The risk score is defined analogously to the direct formulation, replacing the directly modeled survival probability with the reconstructed one in display~\eqref{eqn:staticRisk}.

\subsubsection{Cox Proportional Hazards (CoxPH)}
\label{app:static_cox}

We use the \texttt{sksurv} package to implement the CoxPH model.\footnote{\url{https://scikit-survival.readthedocs.io/en/stable/api/generated/sksurv.linear_model.CoxPHSurvivalAnalysis.html}}
The CoxPH model relates the baseline covariates $X_{i,0}$ to the hazard of an event at time $t$ using $\lambda(t\mid X_{i,0})=\lambda_0(t)\exp((X_{i,0})^\top \beta)$,
where $\beta$ is the vector of regression coefficients, and $\lambda_0(t)$ is the baseline hazard function, a population-level function defined as the hazard for the baseline covariate $X=\mathbf{0}$. 
\begin{equation}
\label{eqn:baseline_hazard}
\lambda_0(t) \triangleq \lambda(t \mid X_{i,0} = \mathbf{0}).
\end{equation}
The coefficients $\beta$ are estimated by maximizing the Cox partial likelihood function \citep{cox1972regression}. Let $\hat{\beta}$ denote the fitted coefficients, the risk score for an individual $i$ is defined as the inner product between the covariates and the coefficients:
\begin{equation*}
\hat r (X_{i,0}) =  X_{i,0}^\top \hat \beta.
\end{equation*}
The corresponding survival function is 
\begin{equation*}
\hat S(t\mid X_{i,0}) = \hat S_0(t)^{\exp(X_{i,0}^\top \hat \beta)},
\end{equation*}
where \(\hat S_0(t)\) is the estimated baseline survival function, i.e., the survival function for the reference covariate value \(X_{i,0}=\mathbf{0}\).
Formally, the baseline survival function is
\begin{equation}
\label{eqn:baseline_survival}
S_0(t) \triangleq \mathbb{P} (T>t \mid X_{i,0} = \mathbf{0}),
\end{equation}
and \(\hat S_0(t)\) is estimated using the Breslow estimator \citep{breslow1974covariance}.

\subsubsection{Random Survival Forests (RSF)}
We use the \texttt{sksurv} RSF implementation.\footnote{\url{https://scikit-survival.readthedocs.io/en/stable/api/generated/sksurv.ensemble.RandomSurvivalForest.html}} RSF is a nonparametric ensemble model that extends random forests to right-censored survival data by learning an ensemble of tree-based cumulative hazard functions \citep{ishwaran2008random}. 
An RSF consists of an ensemble of $B$ survival trees. Each tree partitions the feature space into terminal nodes, and for each terminal node a cumulative hazard function is estimated from the training samples in that node using the Nelson-Aalen estimator \citep{nelson1972theory, aalen1978nonparametric}. For an individual $i$, let $\ell_b(X_{i,0})$ denote the terminal node in tree $b$ to which $X_{i,0}$ is assigned. The tree-level cumulative hazard is given by $\hat H_b(t \mid X_{i,0}) = \hat H_{b,\ell_b(X_{i,0})}(t)$.
The forest-level cumulative hazard is obtained by averaging over trees: $\hat H(t \mid X_{i,0}) = \frac{1}{B} \sum_{b=1}^B \hat H_b(t \mid X_{i,0})$.
The risk score for an individual $i$ can be computed as follows:
\begin{equation*}
\hat r (X_{i,0}) = \sum_{t' \in \MC{T}} \hat H(t' \mid X_{i,0}),
\end{equation*}
where $\MC{T}$ is a set that contains all the distinct event times observed in the training data. The corresponding survival function is computed from the forest-level cumulative hazard as
\begin{equation*}
\hat S(t \mid X_{i,0}) = \exp\!\big(-\hat H(t \mid X_{i,0})\big).
\end{equation*}

\subsubsection{DeepSurv}
We use the \texttt{pycox} implementation of DeepSurv.\footnote{\url{https://github.com/havakv/pycox}} DeepSurv is a neural-network extension of the CoxPH model in which the linear risk function is replaced by a nonlinear function learned by a deep neural network \citep{katzman2018deepsurv}.
In DeepSurv, the hazard for an individual with covariates $X_{i,0}$ is $\lambda(t \mid X_{i,0}) = \lambda_0(t)\exp\!\big(f_\theta(X_{i,0})\big)$,
where $\lambda_0(t)$ is the baseline hazard defined in display~\eqref{eqn:baseline_hazard} and $f_\theta(X_{i,0})$ is a neural network with parameters $\theta$ that replaces the linear term $(X_{i,0})^\top \beta$ in CoxPH. The network weights $\theta$ are then learned by maximizing the Cox partial likelihood.
After training, the risk score for an individual $i$ is
\begin{equation*}
\hat r (X_{i,0}) = f_\theta(X_{i,0}),
\end{equation*}
and the corresponding survival function is
\begin{equation*}
\hat S(t \mid X_{i,0}) = \hat S_0(t)^{\exp(f_\theta(X_{i,0}))},
\end{equation*}
where $\hat S_0(t)$ is the estimated baseline survival function defined in display~\eqref{eqn:baseline_survival}; we use the Breslow estimator \citep{breslow1974covariance}.

\subsubsection{DeepHit}
We use the \texttt{pycox} implementation of DeepHit.\footnote{\url{https://github.com/havakv/pycox}} 
DeepHit is a neural-network–based, nonparametric survival model that directly learns the discrete-time distribution of event times without assuming proportional hazards \citep{lee2018deephit}. 
We apply a discretization setting similar to our classifier-based models as described in Section~\ref{sec:method}. For an input \(X_{i,0}\), the network outputs a probability mass function over a discretized time with \(K\) ordered time bins. Specifically, DeepHit produces estimates
\[
p_{\theta,k}(X_{i,0}) \approx \mathbb{P}(T_i \in \tau_k \mid X_{i,0}), \qquad k=1,\dots,K,
\]
where \(\tau_k = (t_{k-1}, t_k]\) are time bins induced by cut points
\(0=t_0<\cdots<t_K=\infty\).
We obtain the risk score by computing the negative expected bin index under the predicted distribution,
\begin{equation*}
\hat r (X_{i,0}) = - \sum_{k=1}^K k \, p_{\theta,k}(X_{i,0}).
\end{equation*}
The survival probability at $t_k$ is then given by
\begin{equation*}
\hat S(t_k \mid X_{i,0}) = 1 - \sum_{j=1}^{k} p_{\theta,j}(X_{i,0}).
\end{equation*}

\subsection{Dynamic Models}
\paragraph{Last-Observation-Carried-Forward (LOCF).}
We employ a slightly modified version of the last-observation-carried-forward (LOCF) method in several of the dynamic models to represent the information available up to a particular time \citep{little2019statistical}. Recall from display~\eqref{eqn:survival-dynamic} that dynamic survival prediction focuses on the conditional survival probability for a future horizon $\Delta > 0$, given the information available at time $t$. Since the covariates are usually observed at irregular intervals, their covariate values are generally not available exactly at time $t$. We therefore apply the LOCF method, where the time-dependent covariate at time $t$ is imputed by its most recent observed value prior to $t$.

Recall from Section~\ref{sec:dynamic_problem_statement} that the observed covariate history for individual $i$ up to time $t$ is $\mathcal H_{i,t}
\triangleq
\{(s, X_{i,s}) : s \in \mathcal T_i,\; s \le t\}$,
where $\mathcal T_i$ denotes the set of observation times for individual $i$.
Define the most recent observation time prior to $t$ by
\begin{equation*}
s_i(t)
\triangleq
\arg\max_{s \in \mathcal T_i:\, s \le t} s .
\end{equation*}
The LOCF representation of the covariate history at time $t$ is then
\begin{equation}
\label{eqn:locf}
X_{i,t}^{\mathrm{LOCF}}
\;\triangleq\;
X_{i,s_i(t)} .
\end{equation}

\subsubsection{Classification-Based Models}
\label{app:classification_model_dynamic}

\paragraph{Representation of history $\phi$ during training.}
Recall from Section \ref{sec:dynamicMethod}, the classification-based dynamic models (MITRA, TabPFN, and XGBoost) take the vector-valued function $\phi_k(\HH_{i,t_k}, Y_{i,1:k}, t_k, t_{k+\Delta})$ as input, and predict the binary label $Y_{i,t_{k+\Delta}}$ during training. Let $\bar X_{i,t_k}$ be the running means of the covariates computed up to $t_k$. We consider the following feature representations:
\begin{itemize}
    \item Base representation: $\phi_k\!\left(\mathcal H_{i,t_k},\, Y_{i,1:k},\, t_k,\, t_{k+\Delta}\right) =
    \Big[
    X^{\mathrm{LOCF}}_{i,t_k},\;
    \bar X_{i,t_k}, \;
    t_k,\;
    t_{k+\Delta}
    \Big]$
    \item Representation incorporating the time elapsed since the most recent observation prior to $t_k$: \\ 
    $\phi_k\!\left(\mathcal H_{i,t_k},\, Y_{i,1:k},\, t_k,\, t_{k+\Delta}\right) =
    \Big[
    X^{\mathrm{LOCF}}_{i,t_k},\;
    \bar X_{i,t_k},\;
    t_k,\;
    t_{k+\Delta}, \;
    t_k - s_i(t_k)
    \Big]$
    \item Representation incorporating the info (i.e. the index) of the prediction horizon: \\
    $\phi_k\!\left(\mathcal H_{i,t_k},\, Y_{i,1:k},\, t_k,\, t_{k+\Delta}\right) =
    \Big[
    X^{\mathrm{LOCF}}_{i,t_k},\;
    \bar X_{i,t_k},\;
    t_k,\;
    t_{k+\Delta},\;
    k+\Delta
    \Big]$
    \item Full Representation: \\
    $\phi_k\!\left(\mathcal H_{i,t_k},\, Y_{i,1:k},\, t_k,\, t_{k+\Delta}\right) =
    \Big[
    X^{\mathrm{LOCF}}_{i,t_k},\;
    \bar X_{i,t_k},\;
    t_k,\;
    t_{k+\Delta},\;
    t_k - s_i(t_k),\;
    k+\Delta
    \Big]$
\end{itemize}
See Appendix \ref{app:hyperparameters} on how we select which representation to use.

\paragraph{Clipping Survival Probabilities.}

Since $\hat S(t_{k+\Delta} \mid T_i > t_k, \mathcal H_{i,t_k})$ must be non-increasing in $\Delta$, we apply a similar monotonicity post-processing described in Appendix~\ref{app:classificationModel}. We refine display~\eqref{eqn:dynamic_inference} by the following
\begin{equation*}
\hat S\!\left(t_{k+\Delta} \mid T_i > t_k,\; \mathcal H_{i,t_k}\right)
\triangleq
\min_{1 \le \ell \le \Delta}
1- \hat{p}(\HH_{i,k}, Y_{i,1:k}, t_k, t_{k+ \ell}), 
\qquad \Delta = 1, \dots, K-1-k.
\end{equation*}
This ensures that $\hat S(t_{k+1}\mid t_k,\; \mathcal H_{i,t_k})\ge \hat S(t_{k+2}\mid t_k,\; \mathcal H_{i,t_k} )\ge \cdots \ge \hat S(t_{K-1}\mid t_k,\; \mathcal H_{i,t_k} )$.

The time-dependent risk score is defined by averaging the predicted future cumulative risks over the remaining time horizon. For individual $i$ at time $t_k$, the risk score is 
\begin{equation*}
\hat r (\HH_{i,t_k})
=
\frac{1}{K-1-k}
\sum_{\Delta=1}^{K-1-k}
\Bigl(1 -
\hat S\!\left(t_{k+\Delta} \mid T_i > t_k,\; \mathcal H_{i,t_k}\right)
\Bigr).
\end{equation*}

\subsubsection{Landmark CoxPH}
\label{app:landmark_cox}

Since the Cox Proportional Hazards (CoxPH) model requires static covariates, we adopt the landmarking approach using LOCF \citep{little2019statistical}. Again, we use the \texttt{sksurv} package to implement the CoxPH model. For each $t_k$, the covariate history is represented by
\begin{equation}
\label{eqn:locf_landmark}
    \phi_{\TN{landmark}}(\HH_{i,t_k}) \triangleq 
    ( X_{i,t_k}^{\mathrm{LOCF}}, \; 
    \bar X_{i,t_k})
\end{equation} 
where we also included the running means of the covariates computed up to $t_k$, $\bar X_{i,t_k}$. For each $t_k$, $k \in \{1,\dots,K-2\}$, we fit a separate CoxPH model using only individuals who are still event‐free at $t_k$. Each landmark CoxPH model yields regression coefficient estimates $\hat\beta_{t_k}$. Analagous to the static setting of the CoxPH model (see Appendix \ref{app:static_cox}), the time‐dependent risk score for individual $i$ at $t_k$ can be computed as
\begin{equation*}
\hat r (\HH_{i,t_k})
=
\phi_{\TN{landmark}}(\HH_{i,t_k})^\top \hat \beta_{t_k}.
\end{equation*}

Survival predictions over a horizon $\Delta >0$ are obtained conditionally on survival up to $t_k$:
\begin{equation*}
\hat S\!\left(t_{k+\Delta} \mid T_i > t_k,\; \mathcal H_{i,t_k}\right)
=
\hat S_{0}\left(t_{k+\Delta} \mid t_k \right)^{\exp(\hat r (\HH_{i,t_k}))}.
\end{equation*}
where \(\hat S_{0}(t_{k+\Delta} \mid t_k)\) is the estimated baseline survival function, i.e., the survival function for a reference individual with zero covariates. Formally, the baseline survival function conditional on time $t_k$ is
\begin{equation}
\label{eqn:baseline_survival_landmark}
S_{0}(t_{k+\Delta} \mid t_k)
\;\triangleq\;
\! \mathbb{P}\left(
T > t_{k+\Delta}
\;\middle|\;
T > t_k,\;
\phi_{\TN{landmark}}(\HH_{i,t_k}) = 0
\right),
\end{equation}
and \(\hat S_{0}(t_{k+\Delta} \mid t_k)\) is estimated using the Breslow estimator \citep{breslow1974covariance}.

\subsubsection{Landmark Random Survival Forest}
Similar to the landmark CoxPH model, at each landmark time $t_k$, $k \in \{1,\dots,K-2\}$, we fit a separate random survival forest (RSF) using only individuals who are event‐free at $t_k$, $\phi_{\TN{landmark}}(\HH_{i,t_k})$ defined in display~\eqref{eqn:locf_landmark} as a representation of the history. 
Again, we use the \texttt{sksurv} RSF implementation.
Each landmark‐specific forest estimates a cumulative hazard function for $t \geq t_k$, $\hat H_{t_k}\!\left(t \mid T_i > t_k,\; \mathcal H_{i,t_k}\right)$.
The time‐dependent risk score for individual $i$ at landmark time $t_k$ is defined as 
\begin{equation*}
\hat r (\HH_{i,t_k})
=
\sum_{t' \in \MC{T}}
\hat H_{t_k}\!\left(t' \mid T_i > t_k,\; \mathcal H_{i,t_k}\right),
\end{equation*}
where $\MC{T}$ is a set that contains all the distinct event times observed in the training data.
The corresponding survival probability over a horizon $\Delta$, conditional on survival up to the landmark time, is given by
\begin{equation*}
\hat S\!\left(t_{k+\Delta} \mid T_i > t_k,\; \mathcal H_{i,t_k}\right)
=
\exp\!\left(
-\hat H_{t_k}\!\left(t_{k+\Delta} \mid T_i > t_k,\; \mathcal H_{i,t_k}\right)
\right).
\end{equation*}

\subsubsection{Dynamic DeepHit}
Dynamic DeepHit \citep{lee2019dynamic} extends DeepHit to the dynamic setting by modeling the conditional distribution of future event times given an individual’s longitudinal history. We use the following public implementation: \url{https://github.com/Jeanselme/DynamicDeepHit/tree/main}

We use a fixed discretization with $K$ time bins that aligns with the time discretization approach described in Section~\ref{sec:method}. The implementation of Dynamic DeepHit takes as input the longitudinal history $\HH_{i,t_k}$ and outputs the estimates of the cumulative incidence distribution,
\begin{equation*}
\hat p_{t_k,k +\Delta}(\HH_{i,t_k})
\approx
\mathbb{P}\!\left(T_i \le t_{k+\Delta} \,\middle|\, T_i > t_k,\; \mathcal H_{i,t_k}\right),
\qquad \Delta = 1,\dots,K-k-1.
\end{equation*}
We define the time-dependent risk score at time $t_k$ as the predicted probability of experiencing the event in the next interval,
\begin{equation*}
\hat r (\HH_{i,t_k})
=
\hat p_{t_k,k +1}(\HH_{i,t_k}).
\end{equation*}
The conditional survival probability at $t_{k+\Delta}$ given survival up to $t_k$ is
\begin{equation*}
\hat S\!\left(t_{k+\Delta} \mid T_i > t_k,\; \mathcal H_{i,t_k}\right)
=
1 - \; \hat p_{t_k,k +\Delta}(\HH_{i,t_k}).
\end{equation*}

\subsubsection{Joint Model}
The joint model links a longitudinal process and a survival outcome through shared subject-specific random effects \citep{ibrahim2010basic}. We implement the joint model using the \texttt{JMbayes2} package.\footnote{\url{https://cran.r-project.org/web/packages/JMbayes2/index.html}} We model the longitudinal process using a linear mixed-effects model, while the survival outcome is modeled using a CoxPH model.

Given survival up to a time $t_k$ and the observed longitudinal history $\mathcal H_{i,t_k}$ the joint model yields dynamic survival predictions at future times $t_{k+\Delta} > t_k$. At each $t_k$, the predicted event probability at a future time point $t_{k+\Delta}$ is denoted using $\hat p_{t_k,k+\Delta}\!\left(\mathcal H_{i,t_k}\right)$,
which can be obtained by calling the \texttt{predict()} function in the \texttt{JMbayes2} package.
We define the time-dependent risk score at $t_k$ as the predicted probability the event occurs at the next interval
\begin{equation*}
\hat r\!\left(\mathcal H_{i,t_k}\right)
=
\hat p_{t_k,k+1}\!\left(\mathcal H_{i,t_k}\right),
\end{equation*}
and the conditional survival probability at $t_{k+\Delta}$ can be computed using
\begin{equation*}
\hat S\!\left(t_{k+\Delta} \mid T_i > t_k,\; \mathcal H_{i,t_k}\right)
=
1 - \hat p_{t_k,k+\Delta}\!\left(\mathcal H_{i,t_k}\right).
\end{equation*}

\section{Additional Experimental Details and Results}

\subsection{Datasets}
\label{app:datasets}

\paragraph{Static.}
Across 43 static datasets \citep{drysdale2022survset},  the median number of unique samples was 461 (range 26--6805). The number of covariates ranged from 3 to 160. The censoring rate ranged from 6.6\% to 94.4\%.
\begin{table}[htbp]
\centering
\caption{Summary statistics for each static survival dataset.}
\begin{tabular}{|c|c|c|c|c|}
\hline
\textbf{No.} & \textbf{Dataset} & \textbf{No. of covariates} & \textbf{No. of unique samples} & \textbf{Censoring (\%)} \\
\hline
1 & Aids2 & 4 & 2839 & 38.0 \\
\hline
2 & Dialysis & 4 & 6805 & 76.4 \\
\hline
3 & Framingham & 7 & 4699 & 68.7 \\
\hline
4 & GBSG2 & 8 & 686 & 56.4 \\
\hline
5 & Melanoma & 5 & 205 & 72.2 \\
\hline
6 & TRACE & 6 & 1878 & 49.0 \\
\hline
7 & UnempDur & 6 & 3241 & 38.7 \\
\hline
8 & Unemployment & 5 & 452 & 43.4 \\
\hline
9 & actg & 11 & 1151 & 91.7 \\
\hline
10 & breast & 4 & 100 & 74.0 \\
\hline
11 & burn & 11 & 154 & 68.8 \\
\hline
12 & cancer & 8 & 228 & 27.6 \\
\hline
13 & cgd & 10 & 128 & 65.6 \\
\hline
14 & colon & 9 & 929 & 51.3 \\
\hline
15 & cost & 13 & 518 & 22.0 \\
\hline
16 & d.oropha.rec & 12 & 192 & 27.6 \\
\hline
17 & dataDIVAT1 & 5 & 5943 & 83.6 \\
\hline
18 & dataDIVAT2 & 4 & 1837 & 68.3 \\
\hline
19 & dataDIVAT3 & 7 & 4267 & 94.4 \\
\hline
20 & dataOvarian1 & 160 & 912 & 40.4 \\
\hline
21 & diabetes & 4 & 197 & 60.7 \\
\hline
22 & e1684 & 3 & 284 & 31.0 \\
\hline
23 & follic & 5 & 541 & 35.7 \\
\hline
24 & glioma & 4 & 37 & 37.8 \\
\hline
25 & hepatoCellular & 43 & 227 & 57.3 \\
\hline
26 & mgus & 9 & 241 & 6.6 \\
\hline
27 & nki70 & 75 & 144 & 66.7 \\
\hline
28 & nwtco & 7 & 4028 & 85.8 \\
\hline
29 & ova & 5 & 358 & 25.7 \\
\hline
30 & ovarian & 4 & 26 & 53.8 \\
\hline
31 & pbc & 6 & 312 & 59.9 \\
\hline
32 & phpl04K8a & 21 & 442 & 46.6 \\
\hline
33 & prostate & 15 & 502 & 29.5 \\
\hline
34 & rdata & 4 & 1040 & 47.4 \\
\hline
35 & retinopathy & 7 & 197 & 60.7 \\
\hline
36 & scania & 5 & 1931 & 43.8 \\
\hline
37 & smarto & 26 & 3873 & 88.1 \\
\hline
38 & stagec & 7 & 146 & 63.0 \\
\hline
39 & uis & 8 & 628 & 19.1 \\
\hline
40 & veteran & 6 & 137 & 6.6 \\
\hline
41 & vlbw & 22 & 617 & 82.7 \\
\hline
42 & whas500 & 16 & 461 & 61.8 \\
\hline
43 & zinc & 13 & 431 & 81.2 \\
\hline
\end{tabular}
\end{table}

\paragraph{Dynamic.}

Across 5 dynamic datasets \citep{drysdale2022survset}, the median number of unique samples was 467 (range 103--4603). The number of covariates ranged from 4 to 7. The patient-wise censoring rate ranged from 27.2\% to 66.2\%.

\begin{table}[htbp]
\centering
\caption{Summary statistics for each dynamic survival dataset.}
\begin{tabular}{|c|c|c|c|c|}
\hline
\textbf{No.} & \textbf{Dataset} & \textbf{No. of covariates} & \textbf{No. of unique samples} & \textbf{Censoring (\%)} \\
\hline
1 & aids & 5 & 467 & 59.7 \\
\hline
2 & csl & 6 & 446 & 39.5 \\
\hline
3 & epileptic & 5 & 556 & 66.2 \\
\hline
4 & heart & 4 & 103 & 27.2 \\
\hline
5 & oldmort & 7 & 4603 & 57.2 \\
\hline
\end{tabular}
\end{table}

\subsection{Data Preprocessing}
\label{app:dynamic_preprocessing}

We split each dataset into training, validation, and test sets (70:15:15) using a stratified
procedure on the event indicator to preserve the event--censoring proportion
across splits. Categorical covariates were then transformed via one-hot encoding, fitted on the training set, and the resulting feature matrices for the validation and test sets were aligned to the training design matrix by adding any missing dummy columns and filling them with zeros. Missing covariate values were handled using a \texttt{SimpleImputer}\footnote{\url{https://scikit-learn.org/stable/modules/generated/sklearn.impute.SimpleImputer.html}} fit on the training data only and subsequently applied to the validation and test sets, ensuring that no information from the held-out splits informed preprocessing.

\subsection{Hyperparameters}
\label{app:hyperparameters}

\paragraph{Model selection approach.}
For each dataset, we select
hyperparameters using the best performing model in terms of \emph{C-Index} (IPCW estimator) on the  validation split.

\paragraph{Static search spaces}
For the baselines, we perform a small grid search:
\begin{itemize}
  \item \textbf{CoxPH:} $\ell_2$ regularization strength
  $\alpha \in \{10^{-6},\,10^{-3},\,10^{-1}\}$.
  \item \textbf{RSF:} number of trees $n_{\text{estimators}} \in \{50,100,200\}$,
  min split $\in \{2,5,10,20\}$.
  \item \textbf{DeepSurv:} dropout $\in \{0,0.1\}$,
  hidden layers $\in\{(128,32),(256,32),(512,32)\}$,
  learning rate $\in \{10^{-3},10^{-4},10^{-5}\}$.
  \item \textbf{DeepHit:} dropout $\in \{0,0.1\}$,
  hidden layers $\in\{(128,32),(256,32),(512,32)\}$,
  learning rate $\in \{10^{-3},10^{-4},10^{-5}\}$.
\end{itemize}

Note that we do not define any hyperparameters for the classifier models in the static setting.

\paragraph{Dynamic search spaces}
For the baselines, we perform a small grid search:
\begin{itemize}
  \item \textbf{Landmark CoxPH:} $\ell_2$ regularization strength
  $\alpha \in \{10^{-6},\,10^{-3},\,10^{-1}\}$.
  \item \textbf{Landmark RSF:} number of trees $n_{\text{estimators}}\in\{100,200,500,1000\}$, max depth $\in\{3,10,50,\texttt{None}\}$, min split $\in\{10,50\}$.
  \item \textbf{Dynamic DeepHit:} RNN hidden units $\in\{50,100\}$, fully connected hidden layers $\in\{(100,100),(100,100,100)\}$, loss parameter $\sigma \in\{1,3\}$, learning rate $\in\{10^{-4}, 10^{-3},5\times10^{-3},10^{-2}\}$.
  \item \textbf{Joint model:} standard deviation of the Normal prior on the association parameter, $\alpha \sim \mathcal N(0,\sigma_\alpha^2)$ with $\sigma_\alpha \in \{10^{-6},\,10^{-3},\,10^{-1}\}$.
\end{itemize}

Recall from Appendix~\ref{app:classification_model_dynamic} the representations of the features $\phi$. For all classifier-based dynamic models (MITRA, TabPFN, and XGBoost), we tune how
temporal information is encoded in the input features using two binary
hyperparameters:
\begin{itemize}
  \item \texttt{quantile\_info}: whether the index of the prediction horizon is included as an additional covariate;
  \item \texttt{time\_since}: whether the time elapsed since the most recent
  observation prior to the timepoint $t_k$ is included as a covariate.
\end{itemize}
Each binary flag takes values in $\{0,1\}$, so there are four combinations of feature representations. For XGBoost, we additionally tune the tree ensemble parameters: \texttt{n\_estimators} $\in\{100, 200\}$, \texttt{learning\_rate} $\in\{0.1, 0.3\}$, and \texttt{max\_depth} $\in\{3, 6\}$.

\subsection{Sensitivity to discretization granularity \texorpdfstring{$K$}{K}}
\label{app:K_sensitivity}

We assess the sensitivity of our results for the number of discretization bins $K \in \{5, 10, 15, 20\}$. In Figure~\ref{fig:K_sensitivity_group}, we show results for the best performing model in each class. In Figure~\ref{fig:K_sensitivity_tfm}, we show results for different TFM-based approaches.

\begin{figure}[ht]
    \centering
    \includegraphics[width=\linewidth]{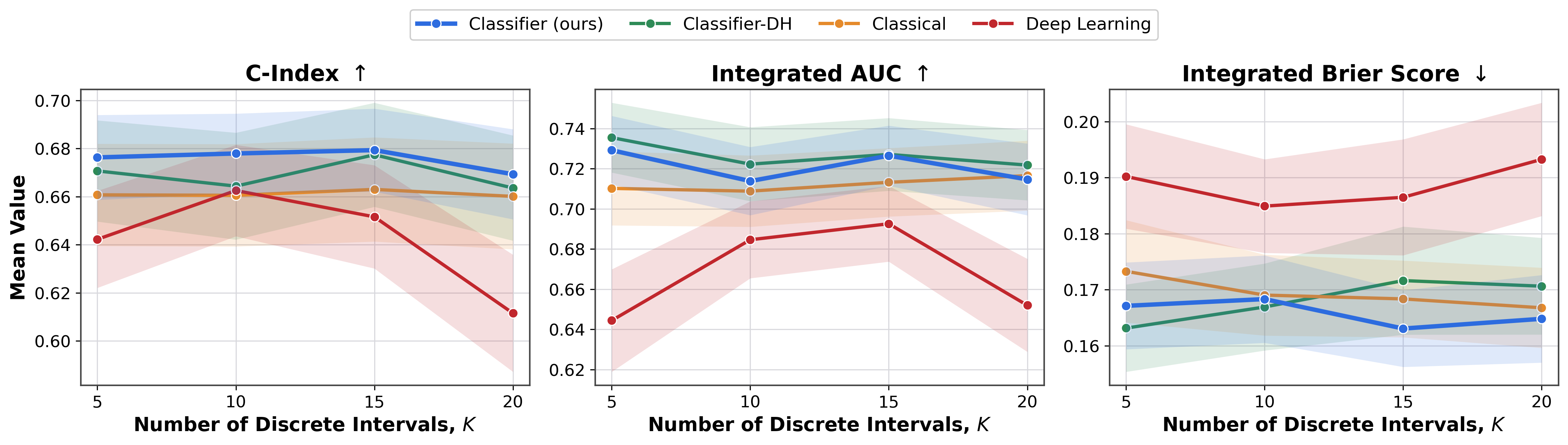}
    \caption{\bo{Static model performance over discretization granularities $K \in \{5,10,15,20\}$ by model group.} We group methods into classical (CoxPH, RSF), deep learning (DeepSurv, DeepHit), our cumulative failure formulation (MITRA-CF, TabPFN-CF, XGBoost-CF), and the discrete-time hazard variants (MITRA-DH, TabPFN-DH, XGBoost-DH). Within each group we pick the best-performing model by validation C-index. Our cumulative failure models are among the best across all three metrics, and is particularly strong on IBS at larger $K$---compared to the \emph{DH} models that degrade with larger $K$.}
    \label{fig:K_sensitivity_group}
\end{figure}

\begin{figure}[htbp]
    \centering
    \includegraphics[width=0.95\linewidth]{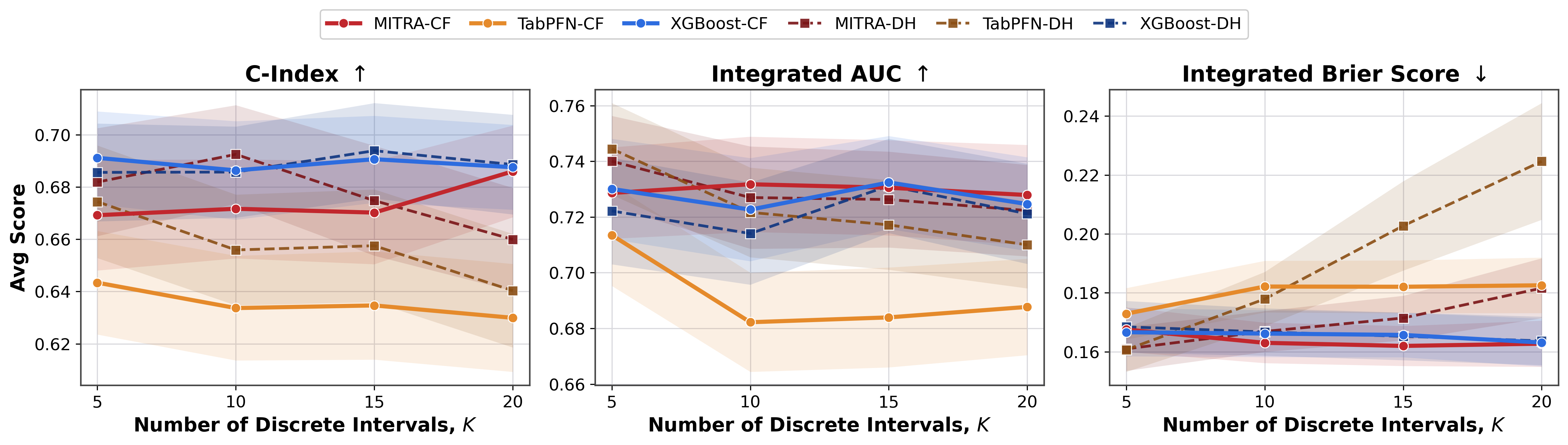}
    \caption{\bo{Tabular foundation model performance over discretization granularities $K$ (per-model).} MITRA-CF (our direct formulation) is the strongest TFM at larger $K$. At smaller $K$, MITRA-DH and TabPFN-DH can be competitive on C-index or Integrated AUC, but TabPFN-DH degrades sharply as $K$ grows, especially in IBS, consistent with multiplicative error compounding in the survival reconstruction $S(t_k) = \prod_j (1-\lambda(t_j))$.}
    \label{fig:K_sensitivity_tfm}
\end{figure}

\bo{Why the DH formulation degrades at large $K$.}
The DH reconstruction $S(t_k\mid X_{i,0})=\prod_{j=1}^{k-1}(1-\hat\lambda(t_j\mid X_{i,0}))$ multiplies $K{-}1$ per-bin estimates. Even small per-bin estimation errors compound multiplicatively across these factors, so the error in $\hat S(t_k\mid X_{i,0})$ grows with $K$. This is structurally analogous to the error-accumulation problem documented for iterated multistep forecasting in macroeconomics \citep{marcellino2006comparison}.

\bo{Model-specific behavior.}
The interaction between the choice of TFM and the choice of formulation is non-uniform (Figure~\ref{fig:K_sensitivity_tfm}). For MITRA, the direct formulation is preferred at larger $K$; for TabPFN, the DH variant can be competitive at small $K$ but degrades sharply at large $K$, particularly in IBS. We hypothesize this reflects different pretraining inductive biases of the two TFMs and, given the lack of survival-specific pretraining, leaves open the possibility that a TFM directly pretrained for survival would close this gap.
\subsection{Censoring-aware binning via Kaplan--Meier quantiles}
\label{app:km_binning}

Throughout the main paper, we set the bin boundaries $\{t_k\}_{k=1}^{K-1}$ as quantiles of the
\emph{observed} event times in the training set. Under heavy censoring, observed events are a
non-random subset of all events, which could in principle bias the chosen discretization. In this appendix section, we evaluate an alternative \emph{censoring-aware} binning strategy.
Following \citet{kvamme2021continuous}, we consider an alternative binning where the boundaries are chosen as equi-spaced quantiles of the Kaplan--Meier (KM) estimate of the marginal survival function $S(t) = \PP(T_i > t)$ computed on the training data. This places more bin boundaries in regions of high event density even when those regions are partially obscured by censoring, and is the standard alternative in the discrete-time survival literature.

Table~\ref{tab:performance_comparison_km_quantile} reports the aggregate static performance under
the KM-quantile binning, with the same metrics, baselines, and $K \in \{5,10,15,20\}$ as
Table~\ref{tab:performance_comparison} in the main paper. Additionally, Figures \ref{fig:K_sensitivity_group_km} and \ref{fig:K_sensitivity_tfm_km} depict results for different discretization granularities $K$. Overall, the qualitative picture is unchanged under this alternative binning approach: MITRA-CF remains the best overall, the cumulative failure approach outperforms discrete hazard modeling for each classifier model type. We conclude that our classification formulation is \emph{robust to the choice of binning rule}, despite heavy and heterogeneous censoring across the $43$ datasets.

\begin{table}[h!]
\centering
\caption{\textbf{Average performance comparison in the static setting (discretization using quantiles of the Kaplan--Meier curve).}
Values represent means and standard errors aggregated over $43$ datasets and $4$ time-discretization granularities ($K \in \{5, 10, 15, 20\}$). The best and second-best performers are highlighted in \colorbox{blue!25}{dark blue} and \colorbox{blue!10}{light blue}, respectively. \vspace{-1.5mm}}
\label{tab:performance_comparison_km_quantile}
\resizebox{\textwidth}{!}{
\begin{tabular}{lccccccccccc}
\toprule
\multirow{2}{*}{\textbf{Model}} & \multicolumn{3}{c}{\textbf{C-index}} & \multicolumn{3}{c}{\textbf{Integrated AUC}} & \multicolumn{3}{c}{\textbf{Integrated Brier Score}} & \multirow{2}{*}{\textbf{Avg. Rank $\downarrow$}} & \multirow{2}{*}{\textbf{Avg. ELO $\uparrow$}} \\
\cmidrule(r){2-4} \cmidrule(r){5-7} \cmidrule(r){8-10}
 & Value $\uparrow$ & Rank $\downarrow$ & ELO $\uparrow$ & Value $\uparrow$ & Rank $\downarrow$ & ELO $\uparrow$ & Value $\downarrow$ & Rank $\downarrow$ & ELO $\uparrow$ & & \\
\midrule
\multicolumn{12}{l}{\textit{Classification-based Models}} \\
\textbf{MITRA-CF} & 0.674$_{\pm 0.017}$ & \cellcolor{blue!25}\textbf{4.1}$_{\pm 0.3}$ & \cellcolor{blue!25}\textbf{1039} & \cellcolor{blue!25}\textbf{0.736}$_{\pm 0.015}$ & \cellcolor{blue!25}\textbf{3.5}$_{\pm 0.3}$ & \cellcolor{blue!25}\textbf{1058} & \cellcolor{blue!25}\textbf{0.162}$_{\pm 0.007}$ & \cellcolor{blue!25}\textbf{3.7}$_{\pm 0.3}$ & \cellcolor{blue!25}\textbf{1051} & \cellcolor{blue!25}\textbf{3.7}$_{\pm 0.3}$ & \cellcolor{blue!25}\textbf{1049} \\
\textbf{TabPFN-CF} & 0.635$_{\pm 0.020}$ & 7.0$_{\pm 0.4}$ & 961 & 0.706$_{\pm 0.018}$ & 6.6$_{\pm 0.4}$ & 970 & 0.177$_{\pm 0.009}$ & 7.1$_{\pm 0.4}$ & 958 & 6.9$_{\pm 0.3}$ & 963 \\
\textbf{XGBoost-CF} & 0.681$_{\pm 0.015}$ & 4.7$_{\pm 0.4}$ & 1022 & 0.725$_{\pm 0.015}$ & 4.6$_{\pm 0.4}$ & 1023 & \cellcolor{blue!10}{0.164$_{\pm 0.008}$} & \cellcolor{blue!10}{4.2$_{\pm 0.4}$} & \cellcolor{blue!10}{1036} & \cellcolor{blue!10}{4.5$_{\pm 0.3}$} & \cellcolor{blue!10}{1027} \\
\midrule
\multicolumn{12}{l}{\textit{Discrete-time Hazard Variants}} \\
\textbf{MITRA-DH} & 0.671$_{\pm 0.016}$ & 4.6$_{\pm 0.3}$ & 1023 & 0.732$_{\pm 0.017}$ & 4.7$_{\pm 0.3}$ & 1023 & 0.167$_{\pm 0.007}$ & 5.0$_{\pm 0.3}$ & 1013 & 4.8$_{\pm 0.3}$ & 1020 \\
\textbf{TabPFN-DH} & 0.659$_{\pm 0.020}$ & 4.8$_{\pm 0.4}$ & 1021 & \cellcolor{blue!10}{0.733$_{\pm 0.016}$} & \cellcolor{blue!10}{3.9$_{\pm 0.4}$} & \cellcolor{blue!10}{1044} & 0.179$_{\pm 0.011}$ & 5.5$_{\pm 0.5}$ & 1002 & 4.7$_{\pm 0.4}$ & 1022 \\
\textbf{XGBoost-DH} & 0.681$_{\pm 0.017}$ & 4.7$_{\pm 0.4}$ & 1021 & 0.724$_{\pm 0.017}$ & 5.3$_{\pm 0.3}$ & 1003 & 0.165$_{\pm 0.008}$ & 4.3$_{\pm 0.4}$ & 1032 & 4.8$_{\pm 0.3}$ & 1019 \\
\midrule
\multicolumn{12}{l}{\textit{Classical \& Neural Survival Baselines}} \\
\textbf{CPH} & \cellcolor{blue!10}{0.682$_{\pm 0.017}$} & \cellcolor{blue!10}{4.5$_{\pm 0.5}$} & \cellcolor{blue!10}{1025} & 0.726$_{\pm 0.017}$ & 4.5$_{\pm 0.5}$ & 1023 & 0.165$_{\pm 0.008}$ & 4.7$_{\pm 0.4}$ & 1020 & 4.6$_{\pm 0.4}$ & 1023 \\
\textbf{RSF} & \cellcolor{blue!25}\textbf{0.684}$_{\pm 0.018}$ & 5.3$_{\pm 0.4}$ & 1004 & 0.719$_{\pm 0.017}$ & 5.7$_{\pm 0.4}$ & 994 & 0.166$_{\pm 0.007}$ & 5.1$_{\pm 0.4}$ & 1010 & 5.4$_{\pm 0.3}$ & 1003 \\
\textbf{DeepHit} & 0.607$_{\pm 0.012}$ & 8.8$_{\pm 0.3}$ & 910 & 0.629$_{\pm 0.013}$ & 8.9$_{\pm 0.3}$ & 909 & 0.206$_{\pm 0.008}$ & 8.7$_{\pm 0.3}$ & 914 & 8.8$_{\pm 0.3}$ & 911 \\
\textbf{DeepSurv} & 0.649$_{\pm 0.018}$ & 6.5$_{\pm 0.4}$ & 974 & 0.687$_{\pm 0.017}$ & 7.3$_{\pm 0.3}$ & 954 & 0.178$_{\pm 0.008}$ & 6.9$_{\pm 0.3}$ & 964 & 6.9$_{\pm 0.3}$ & 964 \\
\bottomrule
\end{tabular}
}
\end{table}

\begin{figure}[h!]
    \centering
    \includegraphics[width=0.9\linewidth]{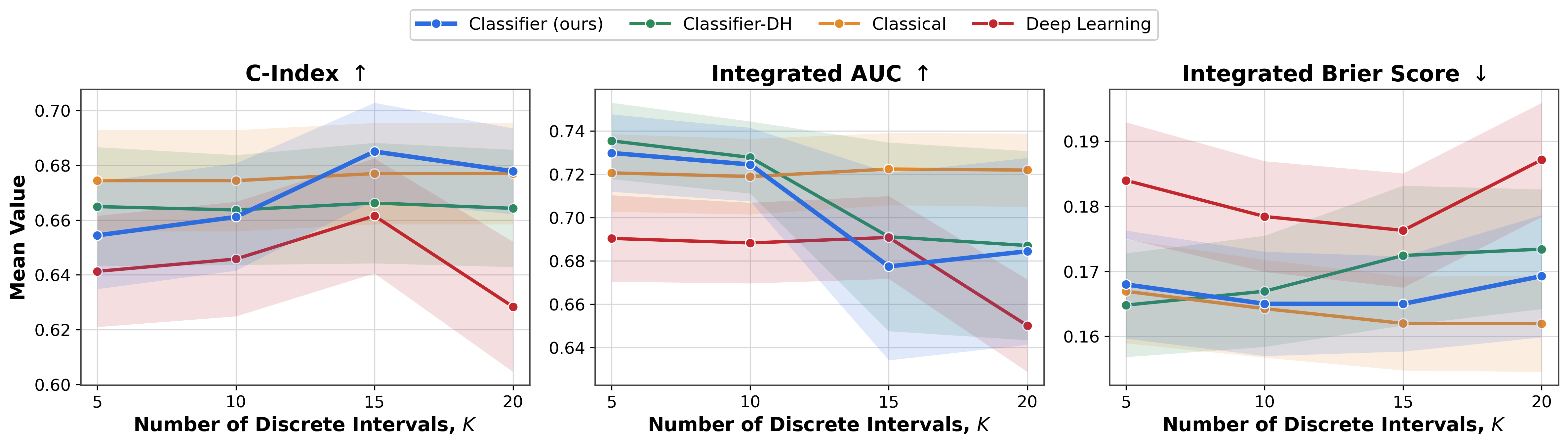}
    \caption{\bo{Static model performance over discretization granularities $K \in \{5,10,15,20\}$ by model group (Kaplan--Meier-quantile binning).} Same grouping and metrics as Figure~\ref{fig:K_sensitivity_group}, except that bin boundaries follow KM-quantile binning instead of observed-event-time quantiles. The qualitative ranking is unchanged: our cumulative failure classification based models is among the best across all metrics, and particularly strong on IBS at larger $K$.}
    \label{fig:K_sensitivity_group_km}
\end{figure}

\begin{figure}[H]
    \centering
    \includegraphics[width=0.9\linewidth]{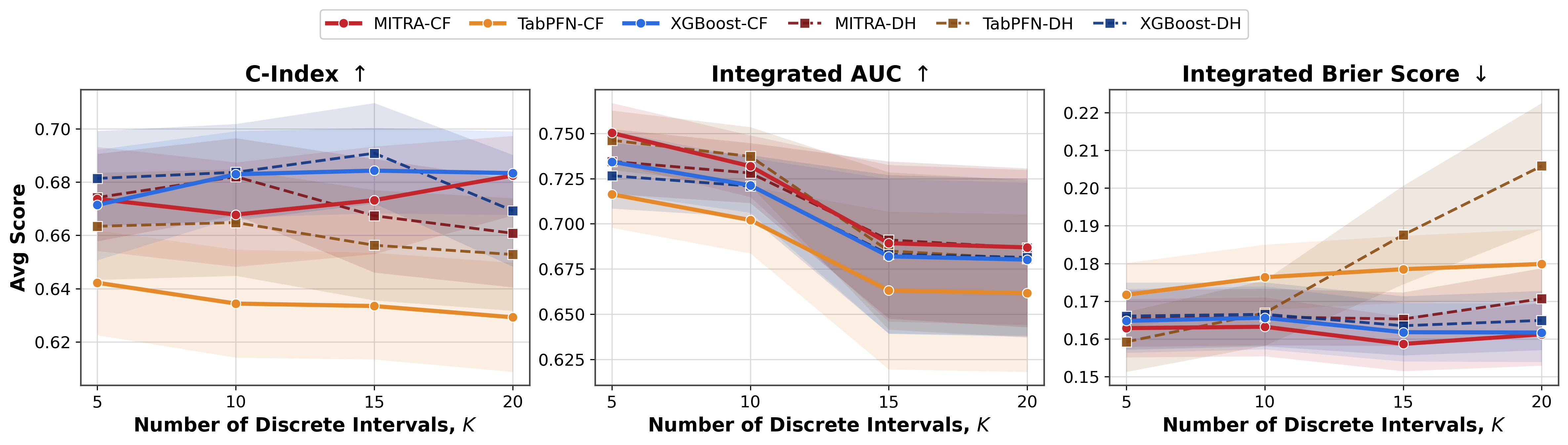}
    \caption{\bo{Tabular foundation model performance over discretization granularities $K$ (Kaplan--Meier-quantile binning).} MITRA-CF remains the strongest TFM at larger $K$, and TabPFN-DH degrades as $K$ grows, which qualitatively matches the results from Figure~\ref{fig:K_sensitivity_tfm} in Appendix \ref{app:K_sensitivity}.}
    \label{fig:K_sensitivity_tfm_km}
\end{figure}
\subsection{TabPFN v3}
\label{app:tabpfnv3}
Throughout the paper, all experiments primarily use TabPFN v2.5 to maintain consistency with the main experimental setup. Following the public release of TabPFN v3, we additionally report TabPFN v3 results in Table~\ref{tab:performance_comparison_additional} for completeness. 

\begin{table}[h!]
\centering
\caption{\textbf{Average performance comparison.}
Results are averaged over four discretization granularities ($K\in\{5,10,15,20\}$). Values denote mean $\pm$ standard error. The best and second-best performers are highlighted in \colorbox{blue!25}{dark blue} and \colorbox{blue!10}{light blue}, respectively.
\vspace{-1.5mm}}
\label{tab:performance_comparison_additional}
\resizebox{\textwidth}{!}{
\begin{tabular}{l ccc ccc ccc cc}
\toprule
\multirow{2}{*}{\textbf{Model}} &
\multicolumn{3}{c}{\textbf{C-Index}} &
\multicolumn{3}{c}{\textbf{Integrated AUC}} &
\multicolumn{3}{c}{\textbf{Integrated Brier Score}} &
\multirow{2}{*}{\textbf{Avg. Rank $\downarrow$}} &
\multirow{2}{*}{\textbf{Avg. ELO $\uparrow$}} \\
\cmidrule(r){2-4}
\cmidrule(r){5-7}
\cmidrule(r){8-10}
& Value $\uparrow$ & Rank $\downarrow$ & ELO $\uparrow$
& Value $\uparrow$ & Rank $\downarrow$ & ELO $\uparrow$
& Value $\downarrow$ & Rank $\downarrow$ & ELO $\uparrow$
& & \\
\midrule

\multicolumn{12}{l}{\textit{Classification-based Models}} \\

\textbf{MITRA-CF}
& 0.674$_{\pm0.017}$
& \cellcolor{blue!25}\textbf{5.7}$_{\pm0.5}$
& \cellcolor{blue!25}\textbf{1055}
& \cellcolor{blue!25}\textbf{0.729}$_{\pm0.016}$
& \cellcolor{blue!25}\textbf{5.6}$_{\pm0.5}$
& \cellcolor{blue!10}1058
& \cellcolor{blue!25}\textbf{0.164}$_{\pm0.007}$
& \cellcolor{blue!25}\textbf{4.1}$_{\pm0.4}$
& \cellcolor{blue!25}\textbf{1092}
& \cellcolor{blue!25}\textbf{5.2}$_{\pm0.4}$
& \cellcolor{blue!25}\textbf{1068}
\\

\textbf{TabPFN-CF}
& 0.635$_{\pm0.020}$
& 9.5$_{\pm0.6}$
& 965
& 0.691$_{\pm0.017}$
& 9.6$_{\pm0.6}$
& 961
& 0.180$_{\pm0.009}$
& 7.7$_{\pm0.5}$
& 1006
& 8.9$_{\pm0.5}$
& 977
\\

\textbf{TabPFNv3-CF}
& 0.624$_{\pm0.019}$
& 10.3$_{\pm0.5}$
& 945
& 0.679$_{\pm0.016}$
& 10.8$_{\pm0.5}$
& 931
& 0.193$_{\pm0.009}$
& 9.9$_{\pm0.5}$
& 957
& 10.3$_{\pm0.4}$
& 944
\\

\textbf{XGBoost-CF}
& \cellcolor{blue!25}\textbf{0.688}$_{\pm0.017}$
& \cellcolor{blue!10}5.8$_{\pm0.6}$
& \cellcolor{blue!10}1054
& 0.726$_{\pm0.017}$
& 5.9$_{\pm0.5}$
& 1052
& \cellcolor{blue!10}0.166$_{\pm0.008}$
& \cellcolor{blue!10}4.4$_{\pm0.4}$
& \cellcolor{blue!10}1089
& \cellcolor{blue!10}5.4$_{\pm0.4}$
& \cellcolor{blue!10}1065
\\

\midrule
\multicolumn{12}{l}{\textit{Multi-class Variants}} \\

\textbf{MITRA-MC}
& 0.626$_{\pm0.014}$
& 10.3$_{\pm0.6}$
& 942
& 0.685$_{\pm0.016}$
& 9.3$_{\pm0.7}$
& 970
& 0.289$_{\pm0.014}$
& 12.3$_{\pm0.3}$
& 896
& 10.6$_{\pm0.4}$
& 936
\\

\textbf{TabPFN-MC}
& 0.638$_{\pm0.018}$
& 8.7$_{\pm0.7}$
& 983
& 0.694$_{\pm0.019}$
& 8.0$_{\pm0.7}$
& 999
& 0.322$_{\pm0.019}$
& 13.1$_{\pm0.4}$
& 874
& 10.0$_{\pm0.5}$
& 952
\\

\textbf{XGBoost-MC}
& 0.634$_{\pm0.016}$
& 10.0$_{\pm0.6}$
& 948
& 0.679$_{\pm0.018}$
& 9.6$_{\pm0.6}$
& 958
& 0.305$_{\pm0.017}$
& 12.7$_{\pm0.4}$
& 887
& 10.8$_{\pm0.4}$
& 931
\\

\midrule
\multicolumn{12}{l}{\textit{Discrete-time Hazard Variants}} \\

\textbf{MITRA-DH}
& 0.677$_{\pm0.018}$
& 5.9$_{\pm0.4}$
& 1049
& \cellcolor{blue!10}0.728$_{\pm0.016}$
& \cellcolor{blue!10}5.8$_{\pm0.5}$
& 1055
& 0.171$_{\pm0.007}$
& 5.8$_{\pm0.4}$
& 1050
& 5.8$_{\pm0.3}$
& 1051
\\

\textbf{TabPFN-DH}
& 0.657$_{\pm0.020}$
& \cellcolor{blue!10}5.8$_{\pm0.7}$
& \cellcolor{blue!10}1054
& 0.723$_{\pm0.015}$
& \cellcolor{blue!25}\textbf{5.6}$_{\pm0.6}$
& \cellcolor{blue!25}\textbf{1060}
& 0.191$_{\pm0.012}$
& 6.8$_{\pm0.7}$
& 1026
& 6.1$_{\pm0.5}$
& 1047
\\

\textbf{TabPFNv3-DH}
& 0.650$_{\pm0.020}$
& 7.1$_{\pm0.7}$
& 1025
& 0.708$_{\pm0.016}$
& 6.8$_{\pm0.7}$
& 1032
& 0.227$_{\pm0.017}$
& 9.1$_{\pm0.7}$
& 975
& 7.6$_{\pm0.6}$
& 1011
\\

\textbf{XGBoost-DH}
& \cellcolor{blue!10}0.687$_{\pm0.018}$
& 6.2$_{\pm0.5}$
& 1043
& 0.721$_{\pm0.017}$
& 6.6$_{\pm0.5}$
& 1033
& 0.167$_{\pm0.008}$
& 4.7$_{\pm0.4}$
& 1078
& 5.8$_{\pm0.4}$
& 1051
\\

\midrule
\multicolumn{12}{l}{\textit{Classical \& Neural Survival Baselines}} \\

\textbf{CPH}
& 0.682$_{\pm0.017}$
& 5.9$_{\pm0.7}$
& 1049
& 0.721$_{\pm0.016}$
& 6.3$_{\pm0.7}$
& 1038
& 0.169$_{\pm0.008}$
& 5.3$_{\pm0.5}$
& 1065
& 5.8$_{\pm0.6}$
& 1051
\\

\textbf{RSF}
& 0.666$_{\pm0.021}$
& 7.6$_{\pm0.6}$
& 1010
& 0.717$_{\pm0.017}$
& 7.9$_{\pm0.6}$
& 1001
& 0.167$_{\pm0.006}$
& 5.7$_{\pm0.5}$
& 1054
& 7.1$_{\pm0.5}$
& 1022
\\

\textbf{DeepHit}
& 0.585$_{\pm0.015}$
& 12.5$_{\pm0.4}$
& 892
& 0.612$_{\pm0.017}$
& 12.7$_{\pm0.4}$
& 886
& 0.211$_{\pm0.008}$
& 10.6$_{\pm0.5}$
& 942
& 11.9$_{\pm0.4}$
& 907
\\

\textbf{DeepSurv}
& 0.650$_{\pm0.017}$
& 8.7$_{\pm0.5}$
& 985
& 0.685$_{\pm0.017}$
& 9.5$_{\pm0.6}$
& 966
& 0.179$_{\pm0.008}$
& 7.7$_{\pm0.4}$
& 1008
& 8.6$_{\pm0.5}$
& 986
\\

\bottomrule
\end{tabular}
}
\vspace{-2mm}
\end{table}

\end{document}